\newtheorem{lem}{Lemma}[section]
\newtheorem{prop}{Proposition}
\begin{document}
%
\title{Cost-effective Object Detection: Active Sample Mining with Switchable Selection Criteria}
%
%
%

\author{Keze Wang,~
        Liang Lin,~
        Xiaopeng Yan,~
        Ziliang Chen,~
        Dongyu Zhang,~
        and Lei Zhang,~\IEEEmembership{Fellow,~IEEE}
\thanks{This work was supported in part by National high level talents special support plan (Ten Thousand Talents Program),  in part by Guangdong ``Climbing Program'' Special Funds under Grant pdjhb0010, in part by National Natural Science Foundation of China (NSFC) under Grant U1611461 and Grant 61702565, in part by Ministry of Public Security Science and Technology Police Foundation Project of No. 2016GABJC48, in part by Science and Technology Planning Project of Guangdong Province of No.2017B010116001, in part by Hong Kong RGC General Research Fund (PolyU 152135/16E), and in part by the Hong Kong Polytechnic University’s Joint Supervision Scheme with the Chinese Mainland, Taiwan and Macao Universities (Grant no. G-SB20). 

K. Wang, L. Lin, X. Yan, Z. Chen, and D. Zhang are with the School of Data and Computer Science, Sun Yat-sen University, Guangzhou, China and the Engineering Research Center for Advanced Computing Engineering Software of Ministry of Education, China. The corresponding author is Liang Lin (e-mail: kezewang@gmail.com; linliang@ieee.org; yanxp3@mail2.sysu.edu.cn; zhangdy27@mail.sysu.edu.cn).}
\thanks{K. Wang and L. Zhang are with the Department of Computing, The Hong Kong Polytechnic University, Hong Kong. (e-mail: cslzhang@comp.polyu.edu.hk)}
}

%
%

\markboth{IEEE Transactions on Neural Networks and Learning Systems, 2018.}%
{}
%



\maketitle


\begin{abstract}
Though quite challenging, leveraging large-scale unlabeled or partially labeled data in learning systems (e.g., model / classifier training) has attracted increasing attentions due to its fundamental importance. To address this problem, many active learning (AL) methods have been proposed that employ up-to-date detectors to retrieve representative minority samples according to predefined confidence or uncertainty thresholds. However, these AL methods cause the detectors to ignore the remaining majority samples (i.e., those with low uncertainty or high prediction confidence). In this work, by developing a principled active sample mining (ASM) framework, we demonstrate that cost-effectively mining samples from these unlabeled majority data is key to training more powerful object detectors while minimizing user effort. Specifically, our ASM framework involves a switchable sample selection mechanism for determining whether an unlabeled sample should be manually annotated via AL or automatically pseudo-labeled via a novel self-learning process. The proposed process can be compatible with mini-batch based training (i.e., using a batch of unlabeled or partially labeled data as a one-time input) for object detection. In this process, the detector, such as a deep neural network, is first applied to the unlabeled samples (i.e., object proposals) to estimate their labels and output the corresponding prediction confidences. Then, our ASM framework is used to select a number of samples and assign pseudo-labels to them. These labels are specific to each learning batch, based on the confidence levels and additional constraints introduced by the AL process, and will be discarded afterward. Then, these temporarily labeled samples are employed for network fine-tuning. In addition, a few samples with low-confidence predictions are selected and annotated via AL. Notably, our method is suitable for object categories that are not seen in the unlabeled data during the learning process. Extensive experiments on two public benchmarks (i.e., the PASCAL VOC 2007/2012 datasets) clearly demonstrate that our ASM framework can achieve performance comparable to that of alternative methods but with significantly fewer annotations.   
\end{abstract}


\begin{IEEEkeywords}
Active Learning; Self-driven Learning; Semi-supervised Learning; Large-scale Object Detection 
\end{IEEEkeywords}

%
\IEEEpeerreviewmaketitle

 \section{Introduction}
Benefiting from the state-of-the-art performance of deep convolutional neural networks (CNNs)~\cite{alexnet12NIPS,googlenet,He_2016_CVPR} obtained, remarkable progress has been achieved in object detection, which is one of the key objectives in computer vision. Through generating the candidate region/proposal of objects from the input image, object detection is converted into a region classification task. Features are usually extracted from candidate object regions via CNNs, e.g., R-CNN~\cite{rcnn14CVPR}, and conventional SVM / softmax classifiers are then used for final detection. Recently, most efforts have involved the design of powerful network architectures, e.g., ResNet~\cite{He_2016_CVPR} and SSD~\cite{ssd16ECCV}, to improve feature learning and computation speed. However, the question of how to {incrementally leverage large-scale unlabeled data to improve} detection performance is also a quite crucial and long-standing problem in {the learning system built by neural networks}. To solve this problem, three remaining technical issues regarding the use of training samples must be overcome: 
\begin{itemize}
\item  Annotating the samples used to train object detectors is usually a labor-intensive task. In contrast to other visual recognition tasks (e.g., image classification or action recognition), {a satisfactory annotation should contain} both an object's category label and its bounding box; thus, {annotating objects within a given image} is extremely time-consuming. Developing approaches for the automatic annotation of unlabeled data is a critical step in reducing the manual annotation burden. 
\item The training samples with the highest potential for improving performance are rare and difficult to identify. As reported in~\cite{ohem2016cvpr}, existing detection benchmarks usually contain an overwhelming number of ``easy'' examples and a small number of ``hard'' ones (i.e., informative samples with various {illumination conditions}, deformations, occlusions and other intra-class variations) that contribute to more effective and efficient training. As discussed in~\cite{Wang_2017_CVPR}, because {the relative difficulty of training samples follows} a long-tailed distribution, ``hard'' examples are uncommon. Therefore, finding such informative samples is a sophisticated task.
\item {Certain} training samples (e.g., outliers or noisy samples) may negatively affect the final detection performance. As reported {with regard to} SPP-net~\cite{spp15PAMI}, FRCN~\cite{frcn}, SSD~\cite{ssd16ECCV}, and RFCN~\cite{rfcn16NIPS} {on the PASCAL VOC benchmarks, detection performance can be substantially improved after the exclusion of the training samples marked as ``difficult'' by annotators}. The reason {for this improvement} may be inseparably linked to misleading or incorrectly annotated samples. {Although} such filtering is quite challenging, a sophisticated method is expected to be able to automatically filter out these outliers/noisy samples or otherwise alleviate the effect of using them to train object detectors.
\end{itemize}

To address the aforementioned issues, we focus on {learning object detectors in a cost-effective manner, which leverages} sample mining techniques to incrementally improve object detection with minimal user effort. {Recently,} active learning (AL) approaches~\cite{lewis1994sequential} have been proposed to progressively select and annotate the most informative unlabeled samples in a dataset to facilitate model refinement through user interaction when necessary. These approaches inspired us to attempt to give the more labor- and computation-intensive tasks to computers, while assigning the less labor-intensive tasks and those that require intelligence to humans~\cite{ISed}. {Therefore, the sample selection criteria play an important role in conventional AL pipelines, and are typically defined in accordance with the classification uncertainty of samples.} Specifically, the minority of unlabeled samples with low prediction confidences (i.e., high uncertainties), together with other informative criteria such as diversity and density, are generally treated as good candidates for {model retraining}. Recently, several AL-based approaches~\cite{llal11CVPR,id17CSR} have been proposed for object detection in a semi-supervised or weakly supervised manner. However, these approaches usually ignore the fact that the remaining majority samples (e.g., those with low uncertainty and high confidence) are also valuable for improving the detection performance. Moreover, manual annotations of unlabeled data are often noisy due to ambiguities or misunderstandings among different annotators, especially for the object detection task. Adding samples with incorrectly annotated bounding boxes may also reduce the detection performance. Therefore, both reducing user effort by mining the remaining majority samples and ensuring the appropriate treatment of outliers and noisy samples should be considered to improve the accuracy and robustness of object detectors.

Given sufficient unlabeled data, we attempt to overcome the {limitations of AL methods discussed} above by investigating recently proposed techniques. Curriculum learning (CL)~\cite{curriculun_learning} and self-paced learning (SPL)~\cite{letf11CVPR, spcl} are two learning regimes that mimic human and animal learning processes, in which training gradually progresses from easy to complex samples, providing a natural and iterative way to exploit labeled data for robust learning. In {CL}, a predefined learning constraint (i.e., a curriculum or curricular constraint) is {employed} to {incrementally} include additional labeled samples during training. In SPL, a weighted loss is introduced on all labeled samples, which acts as a general regularizer over the sample weights. By sequentially optimizing the model while gradually controlling the learning pace via the SPL regularizer, labeled samples can be incrementally added into the training process in a self-paced manner. Inspired by these techniques, several approaches~\cite{ceal16tcsvt, aspl15TPAMI} have been developed to improve AL for image classification by introducing a so-called pseudo-labeling strategy, which is intended to automatically select unlabeled samples with high prediction confidence and iteratively assign pseudo-labels to them in a self-paced manner. 

However, it is quite difficult to apply this pseudo-labeling strategy directly in the object detection task for two reasons. On one hand, {since} high-confidence samples are selected only in accordance with an empirical pace parameter, {the pseudo-annotations generated by the imperfect classifiers usually contain errors due to ignoring any feedback or guidance from the AL algorithm.} Furthermore, because object detection encompasses not only object classification but also accurate object localization, retraining the model with these incorrect pseudo-annotations may reduce the detection performance. On the other hand, the pseudo-labeling strategy must calculate sample weights for all training samples; thus, it is unsuitable for mini-batch-based training, which is the {predominant method used in} object detection pipelines. In addition, because the publicly available object detection benchmarks contain only a limited number of object categories, external image data {will usually contain some objects belonging to undefined} categories (i.e., third-party classes). During training, object detectors may misclassify {objects in} these undefined categories. Hence, a small number of user interactions are necessary to guide the pseudo-labeling strategy and maintain the control of training. 

To incorporate the pseudo-labeling strategy into the AL process and overcome the limitations of both, we propose a principled AL framework that performs active sample mining ({ASM) with switchable selection criteria}~\footnote{The source code will be released at \url{http://www.sysu-hcp.net/asm/}} to incrementally train robust object detectors over unlabeled or partially labeled samples without being compromised by noisy samples and outliers. Our {ASM} framework includes a novel {self-learning} process to facilitate model fine-tuning in a reliable and robust fashion. Specifically, given the prediction confidences, we propose to provisionally assign pseudo-labels to high-confidence region proposals under {additional} constraints introduced by the AL {algorithm and then} to retrain the {detector using these temporary} pseudo-annotations. Considering that the current state-of-the-art object detection pipelines (e.g., FRCN~\cite{frcn} and RFCN~\cite{rfcn16NIPS}) are all fine-tuned via stochastic gradient descent, the proposed {self-learning} process provides an effective and efficient way to {simultaneously perform both the pseudo-labeling of high-confidence samples and the fine-tuning of features} within every mini-batch iteration. This ensures that the high-confidence samples are pseudo-labeled with increasing accuracy as the model performance improves during network fine-tuning. Moreover, we impose a one-vs-rest strategy for handling undefined object categories; i.e., they are predicted negatively by all detectors, including the background detector, {to suppress} model drift when mining external unlabeled data, which may include natural scenes with many previously unseen object categories. {In summary}, the proposed {self-learning} process benefits the AL in two ways: i) it  significantly {reduces} the number of user-annotated samples required to improve the detection performance by {virtue of its} mini-batch training and undefined category handling {capabilities}, and ii) it effectively suppresses the misleading effects of samples that are incorrectly annotated via AL, {thanks to} the compactness and consistency of the high-confidence majority samples in the feature space {for training}.

Our {ASM} framework is formulated as a concise optimization problem. Specifically, we impose two different sets of sample mining scheme functions: one for the high-confidence sample pseudo-labeling mode and the other for the low-confidence sample annotation mode. A selector function is further introduced to {selectively and} seamlessly determine which mode should be executed to update the labels of the unlabeled region proposals. In this way, the proposed {self-learning} process and the AL process can jointly collaborate with each other to perform sample mining. Moreover, {our self-learning} process also considers {the} guidance and feedback from the AL process, making it suitable for large-scale scenarios. In detail, we define two curricula: a self-learning curriculum (SLC) and an active learning curriculum (ALC). The SLC represents a group of unlabeled region proposals with high potential for {accurately} pseudo-labeling, whereas the ALC represents difficult but informative region proposals suitable for active user annotation. During the training phase, the SLC is used to gradually incorporate pseudo-labeled samples, from simple examples to more complex ones, into the data used for retraining. By contrast, the ALC is used to intermittently add annotated samples into the training data in a complex-to-simple manner. Thus, we regard SLC and ALC as dual curricula. As they are updated via AL, these dual curricula effectively guide {the mining of the unlabeled data} using two completely different learning schemata. {Fig.~\ref{fig:architecture} illustrates the main components of our {ASM} framework, including region proposal generation and prediction, the pseudo-labeling of high-confidence samples via the proposed self-learning process, and the annotation of low-confidence samples via AL.}

\begin{figure}[t]
\center
\includegraphics[width = 0.8 \columnwidth]{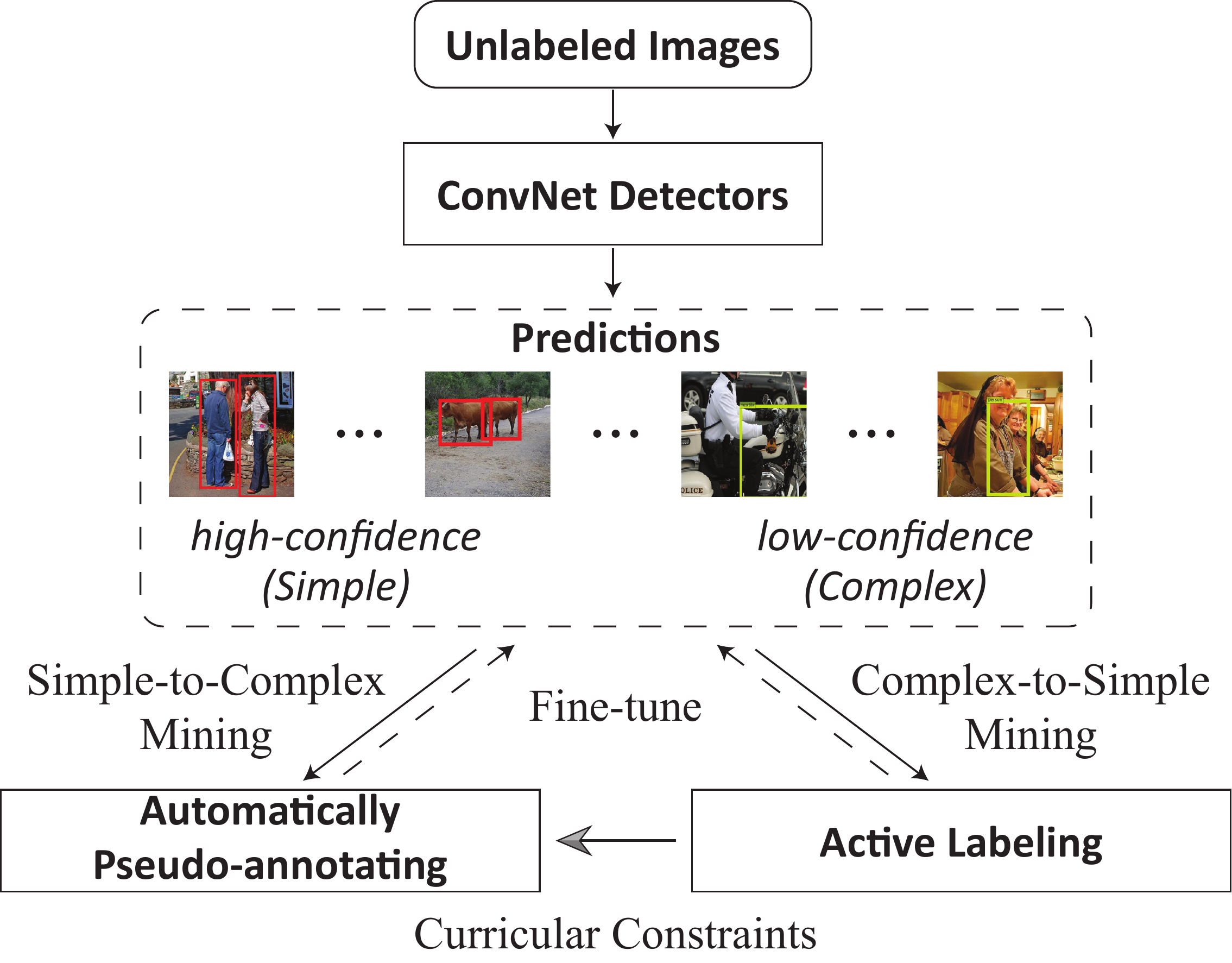}
\vspace{-10pt}
\caption{The proposed active sample mining (ASM) framework for object detection. Given the region proposals generated by the object detectors, our ASM framework includes modes for both the proposed automatically pseudo-annotating and active labeling. The arrows represent the workflow; the solid lines represent the data flow, and the dashed lines represent the fine-tuning operations. This figure shows that our ASM framework provides a rational approach for improving the detection of objects in unlabeled images by automatically distinguishing between high-confidence region predictions---which can be easily and faithfully recognized through simple-to-complex mining by computers via a self-learning process---and low-confidence ones, which can be discovered and labeled via complex-to-simple mining by interactively requesting user annotation. In addition, feedback from the active labeling process helps to guide the pseudo-labeling strategy via curricular constraints, which represent rules for discarding ambiguous predictions.
}
\vspace{-10pt}
\label{fig:architecture}
\end{figure}


The \textbf{main contributions} of this work are threefold. First, we present an {active sample mining} framework that can progressively boost the performance of object detectors by leveraging unlabeled or partially labeled data while minimizing the need for user annotation in a {cost-effective} way. Second, we develop an alternative optimization algorithm to facilitate the process of active sample mining with switchable selection criteria under the guidance of the proposed dual curricula. The developed algorithm can be scaled up for application to large-scale data by means of its ability to {selectively} and seamlessly switch between our self-learning process and the AL process for each unlabeled sample during mini-batch based training. {This is capable of effectively benefiting the application of network-based learning systems}. Third, our framework can adapt to unseen object categories in the unlabeled data during the progressive learning process. Extensive experiments on publicly available benchmarks (i.e., the PASCAL VOC 2007/2012 datasets) demonstrate that our framework not only can outperform the dominant state-of-the-art methods through {the active mining of} additional unlabeled data but also can achieve comparable performance with significantly fewer user annotations. 


The {remainder} of the paper is organized as follows. Section~\ref{sec:related_work} presents a brief review of related work. Section~\ref{sec:alg} provides an overview of the pipeline of our framework, followed by a discussion of the model formulation and optimization. {Experimental} results, comparisons and component analyses are presented in Section~\ref{sec:exper}, and Section~\ref{sec:conc} concludes the paper.

\section{Related Work}
\label{sec:related_work}
\textbf{Active Learning:} 
{Previous} work on AL {has mainly focused} on the sample selection strategy (i.e., selecting the most informative unlabeled samples for user annotation). Certainty-based selection~\cite{lewis1994sequential,tong2002support} is one of the most common {strategies} for AL. The certainty for {each new unlabeled sample} is measured according to the prediction confidence of the initial classifiers. Several SVM-based methods~\cite{tong2002support} {identify uncertain samples on the basis} of their distances to the decision boundary. {Recently, subspace learning~\cite{activesubspace,asl17tcyb} and frobenius-norm based representation~\cite{cb18tnnls} have also been applied to active learning, and obtained many achievements. Specifically, He {\em et al.}~\cite{activesubspace} proposed a novel active subspace learning algorithm which selects the most informative data points and uses them for learning an optimal subspace. Peng {\em et al.}~\cite{asl17tcyb} presented a called principal coefficients embedding method to automatically determine the optimal dimension of feature space and obtain the low-dimensional representation of a given data set under the unsupervised subspace learning scenario.} Regarding object detection via AL, Vijayanarasimhan {\em et al.}~\cite{llal11CVPR} proposed to refine part-based object detectors by actively requesting crowd-sourced annotations of images crawled from the Web. Rhee {\em et al.}~\cite{id17CSR} presented a {semi-supervised AL} method to improve object detection performance by leveraging the concept of diversity adopted from the AL paradigm. 

\textbf{Deep Learning for Object Detection:}
Because they benefit from learning feature representations directly from raw images, deep CNNs have achieved remarkable success in visual recognition, object detection and many other computer vision tasks~\cite{a1b,ob}. Recently, Krizhevsky {\em et al.}~\cite{alexnet12NIPS} designed a CNN that achieved a substantial improvement in image classification accuracy. In this state-of-the-art object detection method, R-CNN~\cite{rcnn14CVPR} is employed to extract features from category-independent region proposals of the input image. More recently, Shrivastava {\em et al.}~\cite{ohem2016cvpr} presented an online hard example mining algorithm to train region-based ConvNet detectors by eliminating several heuristics and hyperparameters. To harness rich information from the vast amount of visual data available, both semi-supervised and weakly supervised approaches for object detection have been proposed. Hoffman {\em et al.}~\cite{Hoffman_2015_CVPR} developed methods for training detectors that exploit joint training over both weak (image-level) and strong (bounding box) labels, and further transfer learned perceptual representations from strongly labeled auxiliary tasks. Yan {\em et al.}~\cite{em17} presented an expectation-maximization-based method for training object detectors on images with image-level labels in combination with some object-level-annotated images.


\textbf{Self-paced Learning:} Inspired by the principles of human/animal cognition, CL~\cite{curriculun_learning} was the first machine learning paradigm to adopt the concept of gradually adding samples to the training data in a controlled and meaningful (e.g., from easy to complex) sequential order called a {\em curriculum}. CL has been widely used to address a variety of computer vision problems, such as tracking~\cite{supancic2013self} and object detection~\cite{NIPS2012_4691}. Conventional CL approaches usually employ predefined sample weights to generate the training order of the samples. To jointly learn the sample weights and model parameters, Kumar {\em et al.}~\cite{spl_kumar} substantially advanced the learning philosophy of CL by proposing {a concise optimization paradigm named self-paced learning (SPL)} that includes a weighting scheme term on all samples and a general regularizer term over the sample weights. The weighting scheme enables training on easy to complex samples by assigning higher weights to samples with lower training losses. Recently, various other methods have also been developed using CL/SPL-related strategies~\cite{spl_reranking, spld, spcl}. Dong {\em et al.}~\cite{MSPLD} proposed an object detection framework that uses only a few bounding box labels per category by consistently alternating between detector amelioration and reliable sample selection. {Zhang {\em et al.}~\cite{bsd16IJCAI} proposed to bridge saliency detection to weakly-supervised object detection via the self-paced curriculum learning to gradually achieve faithful knowledge of multi-class objects from easy to hard. Wang {\em et al.}~\cite{sdm13tcyb} proposed to incorporate low-, mid- and high-level features into the the detection procedure via multiple-instance learning to overcome the challenges of inability and inconsistency for saliency detection. Wang {\em et al.}~\cite{vssc13tcsvt} further proposed to detect salient objects based on selective contrast, which intrinsically explores the most distinguishable component information in color, texture and location.}


\textbf{Self-learning:} In the literature, a few works~\cite{ceal16tcsvt, aspl15TPAMI, NIPS2011_4433, NIPS2012_4691, McClosky, MSPLD,ssm18CVPR} have attempted to leverage samples with high prediction confidence in the context of self-training. 
Chen {\em et al.}~\cite{NIPS2011_4433} proposed the slow addition of both target features and instances, among which the current model is the most confident, to the training set for domain adaption. Tang {\em et al.}~\cite{NIPS2012_4691} introduced a self-paced domain adaptation framework to adapt object detectors trained on images for application {in} videos. Wang {\em et al.}~\cite{ceal16tcsvt} proposed to employ a complementary sample selection strategy to progressively select the most informative samples and pseudo-label the samples with high prediction confidence for training. Wang {\em et al.}~\cite{aspl15TPAMI} further proposed an active SPL framework by incorporating the SPL technique into an AL pipeline via a concise active SPL optimization formulation.   


\section{Active Sample Mining with Switchable Selection Criteria}
\label{sec:alg}

In the context of object detection, suppose that $n$ object region proposals have been generated, corresponding to $m-1$ object categories and a background category. The training set $X=\{x_{i}\}_{i=1}^{n}$ contains all of these proposals as samples. Corresponding to the $m$ categories (including background), there are $m$ probabilistic detectors $\phi_j(x_i; \mathbf{W})$ for recognizing the category of each sample/proposal using the one-vs-rest strategy. Here, $\mathbf{W}$ denotes the shared parameter of our object detector network for all $m$ categories. Correspondingly, the label set of {the given sample} $x_i$ is denoted by $\mathbf{y}_i = \{y_i^{(j)}\}_{j=1}^m$, where $y_{i}^{(j)}$ is the label of {the} sample $x_i$ for the $j$-th object category (i.e., if $x_{i}$ is categorized as an instance of the $j$-th object category, then $y_{i}^{(j)}=1$; otherwise, $y_{i}^{(j)}=-1$). We present two important remarks on our problem setting: i) Only a small number (approximately 10\%) of the samples are initially annotated to obtain adequate initial models. Most of the sample labels $\mathbf{Y} = \{\mathbf{y}_i\}_{i=1}^n$ are unknown and must be determined in the subsequent learning phase. ii) The unlabeled or partially labeled data $\{\mathbf{x}_i\}_{i=1}^n$ are fed into the model in an incremental manner {to continuously boost the detector network.}

\subsection{Framework Formulation}
Under the premises {presented} above, our {ASM} framework is formulated as follows: 
		\begin{small}\begin{equation}
		\begin{gathered}
		\underset{\mathbf{W},\mathbf{Y}, \mathbf{V}}{\min} 
			\underset{\mathbf{U}}{\max} \   
			\mathbb{E}(\mathbf{U},\mathbf{V}; \mathbf{L}(X, \mathbf{Y}; \mathbf{W}), \gamma,{\bm \lambda}) \\= \sum_{i=1}^{n}\sum_{j=1}^{m} \max(u_i, v^{(j)}_i) l^{(j)}_i +  f_{SL}(\mathbf{v}_i, {\bm \lambda}) + f_{AL}(u_i,\gamma) \\\mathbf{s.t.} \ \forall \ i,j, \  y^{(j)}_i = \{-1,1\}, \mathbf{U} \in\mathbf{\Psi}^\gamma, \mathbf{V}\in\mathbf{\Psi}^{\bm \lambda}.
		\label{eq:obj}
		\end{gathered}
		\end{equation}\end{small}{{where} $\mathbf{L}(X, \mathbf{Y}; \mathbf{W}) = \{\mathcal{L}_i\}^n_{i=1} = \{\{l^{(j)}_i\}^m_{j=1}\}_{i=1}^n$ is the empirical loss set, {and} each $l^{(j)}_i$ is expressed as follows:
\begin{small}
	\begin{equation}
	\begin{aligned}
		l^{(j)}_i = &-\big(\frac{1 + y^{(j)}_i}{2}\log\phi_j (x_i;\mathbf{W}) +\frac{1-y^{(j)}_i}{2}\log(1-\phi_j(x_i;\mathbf{W})) \big).
		\label{eq:loss}
	\end{aligned}
	\end{equation} 
\end{small}
The objective function in Eqn.~(\ref{eq:obj}) can be elucidated as follows. Here, $f_{SL}(\cdot)$ and $f_{AL}(\cdot)$ denote the sample mining scheme functions for the pseudo-labeling of high-confidence samples via {our self-learning process (SL)} and the annotation of low-confidence samples via active learning (AL), respectively. Each sample $x_i$ has a latent indicator variable $u_i \in \{0, 1\}$ in the form of an annotation flag, and a latent weight variable $\mathbf{v}_i \in [0, 1)^m$ in the form of an $m$-dimensional weight vector. {The first latent weight variable set, $\mathbf{U}=\{u_i\}_{i=1}^n$, is employed to determine which samples should be annotated by active users.} The second latent weight variable set, $\mathbf{V}=\{\mathbf{v}_i\}_{i=1}^n=\{\{v_i^{(j)}\}^m_{j=1}\}_{i=1}^n$, is calculated from the class-specific loss of each sample according to the current detectors. The selector function $\max(u_i, v^{(j)}_i)$ is introduced to decide which mode will be executed to obtain $y_i^{(j)}$. Specifically, because $u_i\in\{0,1\}$ and $v_i^{(j)}\in[0,1)$, when $u_i=1$, it holds that $u_i > \max\{v_i^{(j)}\}^m_{j=1}$; thus, $\max(u_i,v_i^{(j)}) = u_i=1$. This indicates that the sample $x_i$ has been chosen by $f_{AL}(\cdot)$ for human annotation. When $u_i=0$, it holds that $u_i \leq \min\{v_i^{(j)}\}^m_{j=1}$; thus, $\forall j$, the selector function yields the weight $ \max(u_i,v_i^{(j)}) = v_i^{(j)}$ of $x_i$ for the classifier $\phi_j$, with a class-specific threshold $\lambda^{(j)}$.

\textbf{Dual Curricula}:
Due to the limited performance of the initial model, it is unfeasible to directly calculate each latent weight variable set $\mathbf{U}$ and $\mathbf{V}$ in its entirety for sample mining. Therefore, we leverage two curricula, {i.e.,} $\mathbf{\Psi}^{\gamma}$ and $\mathbf{\Psi}^{{\bm \lambda}}$, to selectively add unlabeled samples into the training set by constraining the optimization of $\mathbf{U}$ and $\mathbf{V}$. Reflecting human knowledge, these two curricula $\mathbf{\Psi}^{\gamma}$ and $\mathbf{\Psi}^{{\bm \lambda}}$ can provide perfect information to guide the model training. Specifically, these two curricula are initialized from $\{0, 1\}_{X}$ and $\{[0, 1)^m\}_X$, without the need for a predetermined learning order {as} \cite{spcl, liang16ijcai}. Suppose that in the $t$-th iteration, we have a set $A_{t-1}$ of samples that have been annotated by active users and a set $B_{t-1}$ of samples that have been discarded by active users (i.e., for being outside of the scope of the defined $m$ categories). The introduced curricula are then updated as shown below: 	

\begin{small}\begin{equation}
\begin{aligned}
\mathbf{\Psi}^{\gamma_t} = U_1^{\gamma_t}\times\cdots\times U^{\gamma_t}_n, \label{eq:c1}
\end{aligned}
\end{equation}\end{small}where $\forall i\in[n]$, if $x_i\in A_{t-1}$, then $U_i=\{1\}$; if $x_i\in B_{t-1}$, then $U_i=\{0\}$; and if $x_i\in X/ (A_{t-1}\bigcup B_{t-1})$, then $U_i=\{0,1\}$. Similarly,

\begin{small}\begin{equation}
\begin{aligned}
\mathbf{\Psi}^{{\bm \lambda}_t}= V^{\bm \lambda_t}_1\times\cdots\times V^{\bm \lambda_t}_n,\label{eq:c2}
\end{aligned}
\end{equation}\end{small}where $\forall i\in[n]$, if $x_i\in A_{t-1}\bigcup B_{t-1}$, then $V_i=\{0\}^m$, and if $x_i\in X/A_{t-1}\bigcup B_{t-1}$, then $V_i=[0,1)^m$. As shown in Eqn.~(\ref{eq:c1}), each dimension of $\mathbf{\Psi}^{\gamma_t}$ represents one of the samples in $X$. The weights of the samples in $A_t$ are all equal to $1$;
they are selected to fine-tune the network parameters based on the user annotations and to update $A_{t+1}$ for the next iteration. Meanwhile, the $u_i$ values of the remaining samples (i.e., $X/A_t$) are constrained to $\{0,1\}$. Note that the samples with $u_i=0$ are chosen by $f_{SL}(\cdot)$ for pseudo-labeling in accordance with the curriculum $\mathbf{\Psi}^{\bm \lambda_t}$ expressed in Eqn.~(\ref{eq:c2}). Each dimension of $\mathbf{\Psi}^{\bm \lambda_t}$ can be viewed as {a} $m$-dimensional vector that represents the scope of the weights of the class-specific loss expressed in Eqn.~(\ref{eq:loss}).} In summary, $\mathbf{\Psi}^{\bm \lambda_t}$ is used to progressively add pseudo-labeled samples, from simple examples to more complex ones, into the training data. By contrast, $\mathbf{\Psi}^{\gamma}$ is used to intermittently add annotated samples into the training data in a complex-to-simple manner. Therefore, we consider $\mathbf{\Psi}^{\bm \lambda_t}$ and $\mathbf{\Psi}^{\gamma}$ as dual curricula.

\textbf{Sample Mining Schemes:} Based on the {pre-knowledge discussed above}, we explain {how sample mining is performed} using $f_{SL}(\cdot)$ and $f_{AL}(\cdot)$. Motivated by the SPL technique~\cite{spcl}, we adopt the {following} linear scheme function $f_{SL}(\cdot)$:	

	\begin{small}
		\begin{equation}
		\label{eq:f1}
		\begin{aligned}
		f_{SL}(\mathbf{v}_i,{\bm \lambda}) &=  \overset{m}{\underset{j=1}{\sum}}\frac{1}{2}\lambda^{(j)}((v_i^{(j)})^2-2v_i^{(j)}) \\
		\mathbf{s.t.} \ \ \forall &j, \ \lambda^{(j)} > 0; \ v_i^{(j)} \in [0,1)\cap V^{\bm \lambda}_i,
		\end{aligned}
		\end{equation} 
	\end{small}where the threshold parameters ${\bm \lambda}=\{\lambda^{(j)}\}_{j=1}^m$ are used to define the high-confidence samples for each classifier $j$. Each $\lambda^{(j)}$ {is initially set to a small value}, {such that it is highly sensitive} to inaccurate pseudo-labels. {During processing, the values of the $\lambda^{(j)}$ are gradually increased} to allow more pseudo-labeled $x_i$ with larger losses for {network} fine-tuning.

For the minority of unlabeled proposals that are hard/informative, the low-confidence sample selection {is formulated using the following} scheme function $f_{AL}(\cdot)$:

\begin{small}
		\begin{equation}
		\label{eq:f2}
		\begin{aligned}
		&f_{AL}(u_i,\gamma) = -\gamma u_i \ \\ \mathbf{s.t.} \ \ \gamma &> 0, \forall u_i\in \mathbf{U}, u_i \in \{0,1\}\cap U^{\bm \gamma}_i, 
		\end{aligned}
		\end{equation}
\end{small}where the parameter $\gamma$ is a positive threshold that identifies the most informative samples for user annotation. The scheme function $f_{AL}(\cdot)$ {serves the opposite function to that of} the self-learning function $f_{SL}(\cdot)$. Specifically, {the objective of} $f_{AL}(\cdot)$ tends to {be maximized} by choosing hard samples with high uncertainty, {thereby driving} the framework to select the most informative samples for user annotation. {Then, active users} can annotate the samples that belong to the $m$ {known} classes and exclude those that lie outside this particular category set. {Notably, unlike} ${\bm \lambda}$, $\gamma$ {does not monotonically increase but instead may} change in various ways.

{We note that} the scheme function $f_{SL}(\cdot)$ represents a greedy self-learning strategy. It significantly {reduces} labor, but it is incapable of preventing the network from {being trained} on the semantic drift caused by accumulated prediction errors. Moreover, $f_{SL}(\cdot)$ also {strongly depends} on the initial parameter $\mathbf{W}$. {However}, the scheme function $f_{AL}(\cdot)$ {allows us to effectively overcome these weaknesses} of $f_{SL}(\cdot)$. Concretely, $f_{AL}(\cdot)$ selects samples for post-processing by active users. The user annotations obtained {from} $f_{AL}(\cdot)$ are considered reliable and {are} successively accepted until the training is completed. Counterintuitively, the pseudo-labels obtained {from} $f_{SL}(\cdot)$ are reliable only during their own training iteration and should be adaptively changed to more robustly guide the learning of the network parameters in each phase. Hence, just like the relationship between the dual curricula $\mathbf{\Psi}^{\gamma}$ and $\mathbf{\Psi}^{\bm \lambda}$, the scheme functions $f_{AL}(\cdot)$ and $f_{SL}(\cdot)$ also exert complementary influences on sample mining.

\subsection{Alternative Learning Strategy}
In our {ASM} framework, an alternating learning strategy is employed. {Specifically, the algorithm iteratively alternates between optimizing the parameters $\{ \mathbf{U}, \mathbf{V}, \mathbf{W}, \mathbf{Y}\}$ in accordance with the dual curricula $\{ \mathbf{\Psi}^{\gamma}$, $\mathbf{\Psi}^{\bm \lambda}\}$ and updating these curricula via AL in accordance with the dynamic $\gamma$ and $\lambda$ thresholds.}
In the following, we introduce the details of this optimization. The corresponding implementation of our {ASM} framework is discussed in Sect.~\ref{sec:implement}. 

\begin{figure*}[t]
\center
\includegraphics[width = 0.75 \textwidth]{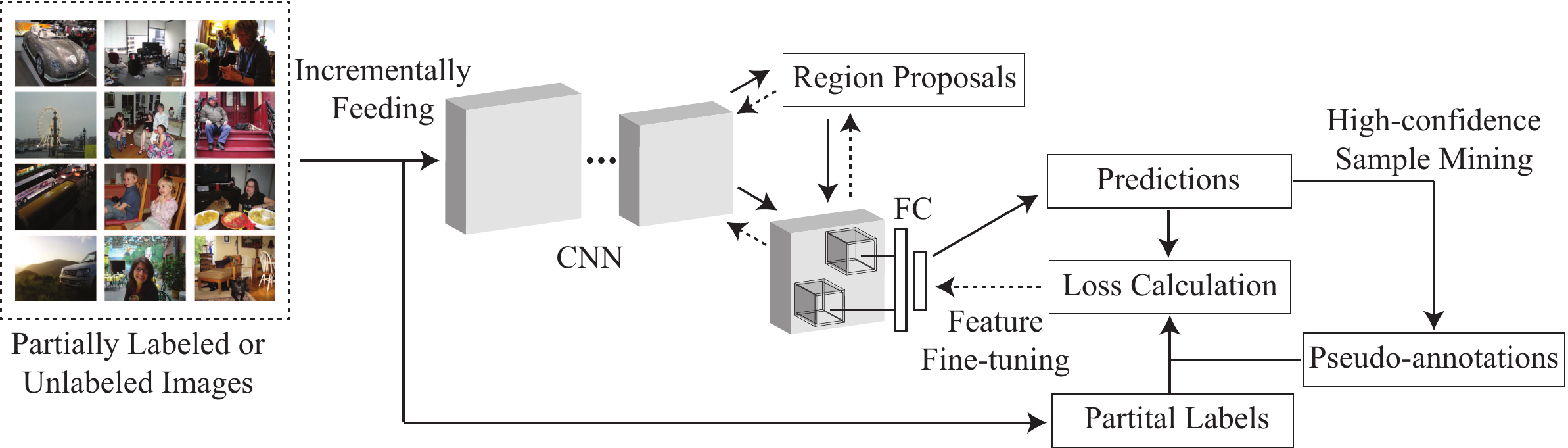}
\vspace{-10pt}
\caption{The workflow of the proposed high-confidence sample pseudo-labeling mode. The solid arrows represent forward propagation, and the dashed arrows represent backward propagation to fine-tune the network. Every mini-batch training iteration consists of three steps: i) region proposal generation and prediction from incrementally input partially labeled or unlabeled images, ii) pseudo-labeling based on high-confidence sample mining, and iii) feature fine-tuning by minimizing the loss between the predictions and training objectives (i.e., partial labels + pseudo-labels). Note that partial labels are absent when the images in a batch are all unlabeled.}
\vspace{-10pt}
\label{fig:highconfidence}
\end{figure*}

\subsubsection{\textbf{Initialization}}
The first step is to initialize the latent weight variable sets $\mathbf{V}_0$ and $\mathbf{U}_0$, the provided initial labels $\mathbf{Y}_0$, and the network parameter $\mathbf{W}_0$. Since our model starts from an unsupervised setting and needs to infer the object proposal categories, human-labeled samples serving as seeds are crucial for ensuring good ultimate performance. We choose these seeds randomly to populate $A_{0}$ and $B_{0}$. Following Eqn.~(\ref{eq:c1}) and Eqn.~(\ref{eq:c2}), we obtain the {initial} dual curricula $\mathbf{\Psi}^{\gamma_1}$ and $\mathbf{\Psi}^{\bm \lambda_1}$ before training. These curricula constrain $\mathbf{U}_0$ and $\mathbf{V}_0$. In practice, any initialization scheme for $\mathbf{U}_0\in\mathbf{\Psi}^{\gamma_1}$ and $\mathbf{V}_0\in\mathbf{\Psi}^{\bm \lambda_1}$ suffices. To obtain $\mathbf{Y}_0$, an active user annotates the seeds; then, the remaining samples in $X$ are pseudo-labeled as $\{-1\}^m_{j=1}$, {such that no} category to which they might belong {is indicated} during initialization. Finally, we choose a pre-trained classification model parameter (pre-trained on, e.g., ImageNet) and fine-tune it on the initial seed annotations to obtain $\mathbf{W}_0$.

\subsubsection{\textbf{Updating $\mathbf{U}$ and $\mathbf{V}$}}
{The purpose of this step is to provide an increasing number of training samples for use in each training iteration. Since the selector function in Eqn. (\ref{eq:obj}) acts on the loss weights stored in $\mathbf{U}$ and $\mathbf{V}$ to choose samples for pseudo-labeling and manual annotation, the values of $\mathbf{U}$ and $\mathbf{V}$ should be optimized appropriately.}

To this end, a max-min optimization of the latent weight variable sets $\mathbf{U}$ and $\mathbf{V}$ is performed, where $\mathbf{U}$ represents the utility of the samples based on uncertainty and $\mathbf{V}$ represents the reliability of the pseudo-labeling of the samples. Specifically, given \{$X, \mathbf{W}, \mathbf{Y}, \gamma, {\bm \lambda}$\}, the latent weight variable sets $\mathbf{V}$ and $\mathbf{U}$ are obtained by simplifying our {ASM} {formulation} in Eqn.~(\ref{eq:obj}) as follows:

\begin{small}	\begin{equation}
	\begin{aligned}
	\underset{ \mathbf{V}}{\min} \
	\underset{\mathbf{U}}{\max}& \  
	\mathbb{E}(\mathbf{U},\mathbf{V}; \mathbf{\overline{L}(X, \mathbf{Y}; \mathbf{W})}, \gamma,{\bm \lambda}) \\= \sum_{i=1}^{n}\sum_{j=1}^{m}\max(u_i,&v^{(j)}_i)\overline{l}^{(j)}_i + f_{SL}(\mathbf{v}_i,{\bm \lambda}) + f_{AL}(u_i,\gamma), \\
	\mathbf{U}&\in\mathbf{\Psi}^\gamma, \ \ \mathbf{V}\in\mathbf{\Psi}^\lambda,
	\label{eq:uv}
	\end{aligned}
	\end{equation}
\end{small}where $\mathbf{\overline{L}}(X, \mathbf{Y}; \mathbf{W})$ denotes that $\mathbf{L}(X, \mathbf{Y}; \mathbf{W})$ is fixed during the {updating} process. Next, we introduce two propositions to demonstrate how to optimize Eqn.~(\ref{eq:uv}); the proofs can be found in Appendix~\ref{sec:p1} and Appendix~\ref{sec:p2}, respectively.

{
\begin{prop}Given a training sample $x_i$, assume that 
	\begin{small}\begin{equation}\begin{gathered}
		E(u_i,\mathbf{v}_i;\mathcal{L}_i,\gamma, {\bm \lambda}) \\= \sum_{j=1}^{m}\max(u_i,v^{(j)}_i)l^{(j)}_i + f_{SL}(\mathbf{v}_i,{\bm \lambda})+ f_{AL}(u_i,\gamma).\label{eq5}
		\end{gathered}
		\end{equation}\end{small}
    Thus, we obtain $\{\mathbf{U}^\ast,\mathbf{V}^\ast\} = \overset{n}{\underset{i=1}{\bigcup}} \ \{u_i^\ast,\mathbf{v}^\ast_i\}$, where $\{\mathbf{U}^\ast,\mathbf{V}^\ast\}$ is the optimal solution to Eqn.~(\ref{eq:uv}) and $\{u^\ast_i,\mathbf{v}^\ast_i\}$ is the optimal solution to $\underset{u_i}{\max} \ \underset{\mathbf{v}_i}{\min} \  E(u_i,\mathbf{v}_i;\mathcal{L}_i,\gamma,{\bm \lambda})$.\end{prop}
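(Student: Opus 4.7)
My plan is to exploit the clean separability of both the objective and the constraint sets across samples, and then address the min--max order swap at the per--sample level.

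First, I would observe that the objective in Eqn.~(\ref{eq:uv}) splits as $\mathbb{E}(\mathbf{U},\mathbf{V};\overline{\mathbf{L}},\gamma,\bm\lambda)=\sum_{i=1}^{n} E(u_i,\mathbf{v}_i;\mathcal{L}_i,\gamma,\bm\lambda)$, because the coupling term $\max(u_i,v_i^{(j)})\,\overline{l}_i^{(j)}$, the self--learning regularizer $f_{SL}(\mathbf{v}_i,\bm\lambda)$ and the active--learning regularizer $f_{AL}(u_i,\gamma)$ each involve only variables associated with sample $x_i$. Combined with the Cartesian product structure of the dual curricula in Eqns.~(\ref{eq:c1}) and (\ref{eq:c2}), namely $\mathbf{\Psi}^{\gamma}=\prod_{i} U_i^{\gamma}$ and $\mathbf{\Psi}^{\bm\lambda}=\prod_{i} V_i^{\bm\lambda}$, this implies that the feasible region factorizes as well. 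Thus, swapping sum and optimization is legitimate, and the joint problem reduces to $n$ independent per--sample saddle problems.

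Next, I would justify the specific ordering $\max_{u_i}\min_{\mathbf{v}_i}$ claimed in the proposition. For a fixed $i$, the inner variable $u_i$ ranges over the discrete set $\{0,1\}\cap U_i^{\gamma}$, and $\mathbf{v}_i$ lies in the convex set $[0,1)^m\cap V_i^{\bm\lambda}$. The function $E(u_i,\mathbf{v}_i;\mathcal{L}_i,\gamma,\bm\lambda)$ is convex in $\mathbf{v}_i$ for each fixed $u_i$ (the quadratic $\sum_j \tfrac{1}{2}\lambda^{(j)}((v_i^{(j)})^2-2v_i^{(j)})$ is convex, and $\max(u_i,v_i^{(j)})$ is convex in $v_i^{(j)}$ because the max of two linear functions is convex). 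Because $u_i$ takes only two values, I would verify the min--max equality by explicit enumeration: for $u_i=1$, we have $\max(u_i,v_i^{(j)})=1$ (using $v_i^{(j)}\in[0,1)$), so $E$ becomes $\sum_j \overline{l}_i^{(j)} - \gamma + f_{SL}(\mathbf{v}_i,\bm\lambda)$; for $u_i=0$, $E$ reduces to $\sum_j v_i^{(j)}\overline{l}_i^{(j)}+f_{SL}(\mathbf{v}_i,\bm\lambda)$. Minimizing each branch over $\mathbf{v}_i$ yields two closed--form values, and comparing them yields the $\max_{u_i}$ result.

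The main obstacle is establishing that this per--sample $\max_{u_i}\min_{\mathbf{v}_i}$ indeed coincides with the original $\min_{\mathbf{v}_i}\max_{u_i}$ inherited from Eqn.~(\ref{eq:uv}), since the discreteness of $u_i$ rules out a direct appeal to Sion's minimax theorem. I would handle this by showing a saddle point exists: the branch $u_i=1$ is preferred by the maximizer precisely when the gain $\gamma$ from $f_{AL}$ is outweighed by the loss increment induced by forcing all $\max(u_i,v_i^{(j)})$ to be $1$ rather than their minimizing $v_i^{(j)}$--values. Whichever branch wins, the minimizer's best response on $\mathbf{v}_i$ is unchanged under the swap because its closed form in the $u_i=0$ branch is untouched when $u_i=1$ dominates via the $\max$. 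This yields an equilibrium pair $(u_i^\ast,\mathbf{v}_i^\ast)$, confirming that $\{u_i^\ast,\mathbf{v}_i^\ast\}$ solves the per--sample $\max_{u_i}\min_{\mathbf{v}_i}E$ and, by separability, that $\{\mathbf{U}^\ast,\mathbf{V}^\ast\}=\bigcup_{i=1}^{n}\{u_i^\ast,\mathbf{v}_i^\ast\}$ solves Eqn.~(\ref{eq:uv}).
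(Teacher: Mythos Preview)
Your separability argument---decoupling the objective as $\mathbb{E}=\sum_i E(u_i,\mathbf{v}_i;\mathcal{L}_i,\gamma,\bm\lambda)$ and using the Cartesian product structure of $\mathbf{\Psi}^{\gamma}$ and $\mathbf{\Psi}^{\bm\lambda}$ to push both the inner and outer optimizations inside the sum---is exactly what the paper does, and this is really all the paper's proof contains. The paper simply writes $\min_{\mathbf{V}}\mathbb{E}=\sum_i\min_{\mathbf{v}_i}E$, then $\max_{\mathbf{U}}\sum_i E(u_i,\mathbf{v}_i^\ast)=\sum_i\max_{u_i}E(u_i,\mathbf{v}_i^\ast)$, and concludes $\max_{\mathbf{U}}\min_{\mathbf{V}}\mathbb{E}=\sum_i\max_{u_i}\min_{\mathbf{v}_i}E$.

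Where you diverge is in your second and third paragraphs: you take seriously the fact that Eqn.~(\ref{eq:uv}) is written as $\min_{\mathbf{V}}\max_{\mathbf{U}}$ while the per-sample problem in the proposition is $\max_{u_i}\min_{\mathbf{v}_i}$, and you attempt to justify this order swap via convexity in $\mathbf{v}_i$, enumeration over $u_i\in\{0,1\}$, and a saddle-point argument. The paper's proof does not address this swap at all; it simply proves separability for the $\max_{\mathbf{U}}\min_{\mathbf{V}}$ ordering and implicitly identifies this with the solution of Eqn.~(\ref{eq:uv}). So your extra work is not something the paper asks for or supplies---you are being more careful than the authors. That said, your saddle-point argument is a bit informal (``the minimizer's best response on $\mathbf{v}_i$ is unchanged under the swap'' would need the explicit two-branch comparison to be airtight), and strictly speaking a saddle point need not exist for all loss configurations; indeed Proposition~2 only covers the regimes $\sum_j l_i^{(j)}<\gamma$ and $\sum_j l_i^{(j)}>\gamma/(1-\epsilon)$, leaving a gap where the swap may fail. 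For the purposes of matching the paper, your first paragraph alone suffices.
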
 
This proposition claims that Eqn.~(\ref{eq:uv}) can be decomposed to solve the instance-level min-max sub-problem $\underset{u_i}{\max} \ \underset{\mathbf{v}_i}{\min} \  E(u_i,\mathbf{v}_i;\mathcal{L}_i,\gamma,{\bm \lambda})$ for each sample $x_i$ in $X$: 
	\begin{small}\begin{equation}\begin{gathered}
		\underset{u_i}{\max} \ \underset{\mathbf{v_i}}{\min} \  E(u_i,\mathbf{v}_i;\mathcal{L}_{i},\gamma,{\bm \lambda}) \\= \sum_{j=1}^{m}\max(u_i,v_i^{(j)})l_i^{(j)} -\gamma u_i + \frac{1}{2}\overset{m}{\underset{j=1}{\sum}}\lambda^{(j)}((v_i^{(j)})^2-2v_i^{(j)}) \\ \mathbf{s.t.} \ \ \gamma > 0, u_i \in \{0,1\}\cap U_i, \\ \lambda^{(j)} > 0; \ \mathbf{v}_i=\{v_i^{(j)}\}^m_{j=1} \in [0,\epsilon]^m\cap V_i \subset [0,1)^m.
		\label{eq:uv2}\end{gathered}
		\end{equation}\end{small}Note that we introduce an adaptive threshold $\epsilon=\max_{x_i \in X} \{1 - \frac{l_i^{(j)}}{\lambda^{(j)}}\}$ to constrain the value of $v^{(j)}_i$. We leverage this threshold to create a marginal gap between the AL and SL processes to ensure more reliable sample selection results between the two complementary strategies. The behavior of this threshold is explained in detail below.  
		When certain conditions are satisfied, the solution to Eqn.~(\ref{eq:uv2}) can take the form of a threshold-based closed-form solution. More specifically, we provide a theoretical solution to the initial problem.
	\begin{prop}
		Given a sample $x_i\in X$, assume that we have a random initialization for $u^{(0)}_i\in\{0,1\}$ and $\mathbf{v}^{(0)}_i\in [0,1)^m$ and specific settings for $\{\gamma,{\bm \lambda},\overline{\mathcal{L}}_i\}$. Given $\epsilon\in(0,1)$, under the condition $\sum_{j=1}^{m}l_i^{(j)}\in(0,\gamma)\cup(\frac{\gamma}{1-\epsilon},\infty)$, the optimization of Eqn.~(\ref{eq:uv2}) converges to the following closed-form solution:
		\begin{small}
			\begin{equation}
			\label{eq:solution}
			\begin{aligned}
			\left\{
			\begin{aligned}
			u_i^\ast & = 1 ,\ (v_i^{(j)})^\ast = \epsilon \ \ \sum_{j=1}^{m}l_i^{(j)} > \frac{\gamma}{1-\epsilon}, \\
			u_i^\ast & = 0 ,\ (v_i^{(j)})^\ast = 0 \ \ \sum_{j=1}^{m}l_i^{(j)} < \gamma ,\ l_i^{(j)}>\lambda_j, \\
			u_i^\ast & = 0 ,\ (v_i^{(j)})^\ast = 1-\frac{l_i^{(j)}}{\lambda^{(j)}} \ \ \sum_{j=1}^{m}l_i^{(j)} < \gamma,\ \lambda_j(1-\epsilon)\leq l_i^{(j)}\leq\lambda_j,\\
			u_i^\ast & = 0 ,\ (v_i^{(j)})^\ast = \epsilon \ \ \sum_{j=1}^{m}l_i^{(j)} < \gamma ,\ l_i^{(j)}<\lambda_j(1-\epsilon).		
			\end{aligned}
			\right.
			\end{aligned}
			\end{equation}
		\end{small}
	\end{prop}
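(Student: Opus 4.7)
The plan is to exploit the fact that $u_i$ is binary, which collapses the inner max-min to evaluating just two quantities $F_0 = \min_{\mathbf{v}_i} E(0,\mathbf{v}_i;\mathcal{L}_i,\gamma,{\bm \lambda})$ and $F_1 = \min_{\mathbf{v}_i} E(1,\mathbf{v}_i;\mathcal{L}_i,\gamma,{\bm \lambda})$ and then comparing them. For $u_i=1$, since $v_i^{(j)}\in[0,\epsilon]\subset[0,1)$ the selector gives $\max(1,v_i^{(j)})=1$, so $E$ decouples into a constant plus the quadratic regularizer $\tfrac{1}{2}\lambda^{(j)}((v_i^{(j)})^2-2v_i^{(j)})$ in each coordinate; this parabola has unconstrained minimum at $v_i^{(j)}=1$, hence is monotonically decreasing on $[0,\epsilon]$ and is minimized at $v_i^{(j)}=\epsilon$. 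This immediately yields the first line of the claimed solution and gives the value $F_1=\sum_j l_i^{(j)}-\gamma-\tfrac{\epsilon(2-\epsilon)}{2}\sum_j\lambda^{(j)}$.

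For $u_i=0$ the selector reduces to $v_i^{(j)}$, making $E$ separable across $j$. The first-order condition $l_i^{(j)}+\lambda^{(j)}(v_i^{(j)}-1)=0$ gives the unconstrained minimizer $1-l_i^{(j)}/\lambda^{(j)}$; projecting onto $[0,\epsilon]$ yields exactly three regimes: $v_i^{(j)}=0$ when $l_i^{(j)}>\lambda^{(j)}$, $v_i^{(j)}=1-l_i^{(j)}/\lambda^{(j)}$ when $\lambda^{(j)}(1-\epsilon)\le l_i^{(j)}\le\lambda^{(j)}$, and $v_i^{(j)}=\epsilon$ when $l_i^{(j)}<\lambda^{(j)}(1-\epsilon)$. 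These three subcases are precisely the three $u_i^\ast=0$ lines in the proposition, so the only remaining task is to verify when each of them is actually selected by the outer maximization over $u_i$.

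To this end, write $F_1-F_0=\sum_j h_j(l_i^{(j)})-\gamma$, where $h_j$ is piecewise-defined: $h_j(l)=(1-\epsilon)l$ on the small-loss regime, $h_j(l)=\tfrac{l^2}{2\lambda^{(j)}}+\tfrac{\lambda^{(j)}(1-\epsilon)^2}{2}$ on the interior regime, and $h_j(l)=l-\tfrac{\epsilon(2-\epsilon)}{2}\lambda^{(j)}$ on the large-loss regime. A short case analysis (AM-GM for the quadratic regime, direct inequalities elsewhere) establishes the uniform envelope $(1-\epsilon)l\le h_j(l)\le l$. Consequently, $\sum_j l_i^{(j)}>\gamma/(1-\epsilon)$ forces $F_1>F_0$ and hence $u_i^\ast=1$, while $\sum_j l_i^{(j)}<\gamma$ forces $F_1<F_0$ and hence $u_i^\ast=0$; substituting the appropriate closed-form $v_i^{(j)}$ from the first two steps completes every branch of the claim. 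The band $[\gamma,\gamma/(1-\epsilon)]$ is the ambiguous margin created by the adaptive truncation $\epsilon$, and is exactly the interval excluded by the hypothesis.

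The main obstacle will be the bookkeeping behind the envelope inequality $(1-\epsilon)l\le h_j(l)\le l$, particularly on the interior quadratic regime and at the two kink points, where one must check continuity and the sign of $h_j(l)-(1-\epsilon)l$ as well as $l-h_j(l)$. Two subsidiary issues also need care: first, that the adaptive $\epsilon=\max_i\{1-l_i^{(j)}/\lambda^{(j)}\}$ is consistent with the stated feasible set $[0,\epsilon]^m\cap V_i\subset[0,1)^m$ (so that the projection step is legitimate); and second, that an alternating scheme initialized from any $(u_i^{(0)},\mathbf{v}_i^{(0)})$ actually converges to this solution, which follows because the inner problem in $\mathbf{v}_i$ is strictly convex (unique minimizer at each step), and the outer maximum over the two-element set $\{0,1\}$ is settled by the already-established comparison of $F_0$ and $F_1$.
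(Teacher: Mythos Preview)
Your identification of the closed-form solution is correct and matches the paper: the $u_i=1$ branch (parabola decreasing on $[0,\epsilon]$, so $v^{(j)}=\epsilon$) and the $u_i=0$ branch (first-order condition projected onto $[0,\epsilon]$, giving the three regimes) are exactly the paper's Lemma~1 and its companion computation. Your envelope $(1-\epsilon)l\le h_j(l)\le l$ is also correct, and the AM--GM step on the interior regime is a nice touch.

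Where you diverge from the paper is in the comparison step. You compute $F_1-F_0$ piecewise via $h_j$ and then bound it. The paper instead bounds $E(1,\mathbf{v})-E(0,\mathbf{v})$ for \emph{arbitrary} $\mathbf{v}\in[0,\epsilon]^m$, which is far simpler because $f_{SL}$ cancels: one gets $E(1,\mathbf{v})-E(0,\mathbf{v})=\sum_j(1-v^{(j)})l^{(j)}-\gamma\in\big[(1-\epsilon)\sum_jl^{(j)}-\gamma,\;\sum_jl^{(j)}-\gamma\big]$ directly from $v^{(j)}\in[0,\epsilon]$, with no case analysis. This is the content of the paper's Lemma~2, and it buys something your route does not: it shows that the $u$-update of the \emph{alternating} scheme lands on $u^*$ after a single step regardless of the current $\mathbf{v}^t$, hence convergence from any initialization.

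This matters because the proposition, as stated, is a convergence claim (``random initialization \ldots\ the optimization converges''), not merely an identification of $\arg\max_u\min_{\mathbf{v}}E$. Your final paragraph asserts that convergence ``follows because \ldots\ the outer maximum is settled by the comparison of $F_0$ and $F_1$,'' but that comparison only pins down the max-min optimizer; it does not by itself rule out a 2-cycle in the alternating scheme (e.g., knowing $F_1>F_0$ does not immediately give $E(1,\hat{\mathbf{v}}_1)>E(0,\hat{\mathbf{v}}_1)$). The fix is exactly the paper's observation above, which you essentially already have the ingredients for---just apply the bound to $E(1,\mathbf{v})-E(0,\mathbf{v})$ at the current iterate rather than to $F_1-F_0$.
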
In brief, this proposition claims that in the first iteration, AL will be triggered when $\sum_{j=1}^{m}l_i^{(j)} > \frac{\gamma}{1-\epsilon}$, whereas $\sum_{j=1}^{m}l_i^{(j)} < \gamma$ will lead to the SL process; these conditions depend on $\gamma$, $\epsilon$ and ${\bm \lambda}$. Obviously, the proposition cannot provide the values of $(u,v)$ when sample $i$ satisfies $\gamma\leq\sum_{j=1}^{m}l_i^{(j)}\leq\frac{\gamma}{1-\epsilon}$. The samples that lie within this margin will always show some level of class confusion in visualization, and they are not sufficiently important to deserve active annotation. Hence, we discard these training samples to achieve both more economical active selection and more robust pseudo-labeling. After the first iteration, an active user interacts with our model to augment $A_{t}$ and $B_{t}$. Then, we further update $\mathbf{\Psi}^\gamma$ and $\mathbf{\Psi}^{\bm \lambda}$ via Eqn.~(\ref{eq:c1}) and Eqn.~(\ref{eq:c2}), respectively. According to the definitions of $\mathbf{\Psi}^\gamma$ and $ \mathbf{\Psi}^{\bm \lambda}$ and the above analysis, when $x_i\in A_{t-1}$, we have $\forall t'>t$, $(u^{(t')\ast}_i, \mathbf{v}^{(t')\ast}_i)=\{1,\{0\}^m\}$. When $x_i\in B_{t-1}$, we have $\forall t'>t$, $(u^{(t')\ast}_i, \mathbf{v}^{(t')\ast}_i)=\{0,\{0\}^m\}$. Finally, when $x_i\in X/A_{t-1}\cup B_{t-1}$, we have $\{0,1\}\cap U_i = \{0,1\}$ and $[0,\epsilon]^m\cap V_i$. Thus, $\{u,v\}$ can be obtained via Proposition~(2).
	
	
}

\subsubsection{\textbf{Updating $\mathbf{Y}$}}
Holding $\mathbf{U}$ and $\mathbf{V}$ fixed as calculated above (denoted by $\mathbf{\overline{U}}$ and $\mathbf{\overline{V}}$), we update $\mathbf{Y}$ for the unlabeled region proposals. In our framework, the minority of these region proposals will be manually annotated by active users, {whereas} the majority will be pseudo-labeled {via our SL process. To complete this task, we develop the following two execution modes: a high-confidence sample pseudo-labeling {mode} and a low-confidence sample annotation mode. 

\textit{High-confidence Sample Pseudo-labeling Mode}: From the proposal set, we select and pseudo-label the high-confidence proposals for further model fine-tuning. As shown in Fig.~\ref{fig:highconfidence}, {in this mode}, {there are three steps in} each training iteration: i) generating region proposals for prediction from the incrementally input data, ii) predicting {classification results} for the generated proposals and {generating pseudo-labels} for use in fine-tuning via high-confidence sample mining, and iii) fine-tuning the network by minimizing the loss between the predictions and training objectives (i.e., partial labels + {pseudo-labels}). Because step i) is straightforward in existing region-based object detection pipelines, we focus on explaining steps ii) and iii), which include assigning pseudo-labels to temporarily update $\mathbf{Y}$ and the network parameter $\mathbf{W}$. Specifically, we perform high-confidence sample pseudo-labeling by optimizing $\mathbf{Y}$:

\begin{small}
\begin{equation}
\label{eq:ss}
		\begin{gathered}
		\underset{ \mathbf{Y}}{\min} \
		\  
		\mathbb{E}(\mathbf{\overline{U}},\mathbf{\overline{V}}; \mathbf{L}(X, \mathbf{Y};\mathbf{W}), \gamma, {\bm \lambda}) \\
		= \underset{\substack{x_i\in X\\\overline{v}^{(j)}_i\geq\overline{u}_i}}{\sum}\sum_{j=1}^{m}\max(\overline{u}_i,\overline{v}^{(j)}_i)l^{(j)}_i \\		
		\mathbf{s.t.} \ \ y^{(j)}_i = \{-1,1\},	\sum_{j=1}^{m} \vert y_{i}^{(j)} + 1 \vert \le 2,
		\end{gathered}
		\end{equation}
\end{small}where $v_i^{(j)}$ is fixed and can be treated as constant. We assign pseudo-labels only to $x_i$ that have a high probability of belonging to a certain object region (i.e., Eqn.~(\ref{eq:ss}) always has a clear solution). As indicated by the constraint $\sum_{j=1}^{m}|y^{(j)}_i+1|\leq 2$, our {ASM} framework largely excludes all samples for pseudo-labeling except under two conditions: i) when $y_{i}^{(j)}$ is predicted to be positive by one classifier but all {other classifiers produce negative predictions, or ii) when all} classifiers predict $y_{i}^{(j)}$ to be negative (i.e., $x_{i}$ is rejected by all classifiers and identified as {belonging to an} undefined object category). These are the rational cases for practical object detection in large-scale scenarios. {Note that we optimize $\mathbf{Y}$ by exhaustively attempting to assign -1 or 1 to each sample for all $m$ categories to minimize the loss function. The computational cost of this process is acceptable because we need to make only $m$+1 attempts under the constraint $\sum_{j=1}^{m}|y^{(j)}_i+1|\leq 2$.}


\textit{Support for an Undefined Object Category:} In this work, the problem of undefined object categories within external unlabeled data is considered by including an undefined sample set $B_t$. Inspired by the {one-vs-rest} strategy, we handle {all samples with undefined} object categories (those recognized as -1 by all classifiers including the background classifier, i.e., $\sum_j |y_i^{(j)} +1| =0$, where $y_i^{(j)}$ takes values in \{-1, 1\}) as belonging to a single undefined object category. This plays a crucial role in suppressing model drift when mining external unlabeled data, which may include many unseen object categories (e.g., the COCO benchmark has 60 more categories than the VOC 2007/2012 benchmark). 

\textit{Low-confidence Sample Annotation {Mode}}: 
After pseudo-labeling the high-confidence object proposals via the self-learning process, we employ data screening criteria using an {uncertainty-based strategy~\cite{lewis1994sequential,tong2002support}}. {The intent of using $f_{SL}(\cdot)$} is to flag most low-confidence unlabeled region proposals in $\mathbf{U}$ and then to have an active user annotate them in a category-by-category fashion as either positive or negative. Specifically, we utilize the current detector-based classifiers to predict the labels of the generated region proposals. In practice, as the detectors become better trained and more reliable, the samples with two or more labels predicted to be positive (i.e., recognized as belonging to more than one object category) will tend to be samples with weak illumination, large deformations, strong occlusions or other intra-class variations, i.e., the samples that cause ambiguity in the current classifiers. We thus consider these to be ``low-confidence" samples and require an active user to manually annotate them to boost the model performance. Other low-confidence criteria could also be utilized; however, we employ this simple strategy due to its intuitive rationality and efficiency. 

We require {an} active user to annotate the selected low-confidence samples in the $t$-th iteration of the AL process. After annotation, these samples {are} divided into two groups. One group contains samples that are outside the scope of the defined object categories (i.e., the undefined object category); these will be added to update $B_t$. The other group {contains} informative/hard samples and will be added to the training set to update $A_t$. We then employ the updated $A_t$ and $B_t$ to update the curriculum constraints $\mathbf{\Psi}^{\gamma}$ and $\mathbf{\Psi}^{\bm \lambda}$ via Eqn.~(\ref{eq:c1}) and Eqn.~(\ref{eq:c2}) by supplementing them with more {information} based on human knowledge.  

\subsubsection{\textbf{Updating $\mathbf{W}$}}
The network parameter $\mathbf{W}$ is updated based on $\{X,\mathbf{V}, \mathbf{U}, \mathbf{Y}, \mathbf{\Psi}^{\gamma}, \mathbf{ \Psi}^{\bm \lambda}\}$, where our {ASM} formulation given in Eqn.~(\ref{eq:obj}) degenerates to solving the following objective:
  \begin{small}
	\begin{equation}
	\label{eq:finetune}
	\begin{gathered}
	\underset{ \mathbf{W}}{\min} \
	\  
	\mathbb{E}(\mathbf{\overline{U}},\mathbf{\overline{V}}; \mathbf{L}(X, \mathbf{Y}; \mathbf{W}), \gamma, {\bm \lambda}) \\
	= \sum_{i=1}^{n}\sum_{j=1}^{m}\max(\overline{u}_i,\overline{v}^{(j)}_i)l^{(j)}_i \\
	=\underset{x_i\in A_t}{\sum}\sum_{j=1}^{m}l^{(j)}_i+\underset{\substack{x_i\in X/B_t\\\overline{v}^{(j)}_i\geq\overline{u}_i}}{\sum}\sum_{j=1}^{m}\overline{v}^{(j)}_il^{(j)}_i.
	\end{gathered}
	\end{equation}\end{small}In a deep learning scenario, this objective is decoupled into a set of mini-batches that can be readily solved by {efficient off-the-shelf} solvers using the stochastic gradient descent (SGD) approach. In our experiments, we employ the widely used standard SGD solver. We directly employ the annotated region proposals in $A_t$ for network fine-tuning, and we assign {temporary pseudo-labels} to the high-confidence region proposals (i.e., $\overline{v}^{(j)}_i\geq\overline{u}_i$) with sample weights of $\overline{v}^{(j)}_i$ for training.

\subsection{Implementation Details}
\label{sec:implement}
We initialize the model with the pre-trained feature representations from all region proposals $\{\mathbf{x}_{i}\}_{i=1}^{n}$ and {specify an initial set of} $m$ threshold parameters for the classifiers, ${\bm \lambda}=\{\lambda^{(j)}\}_{j=1}^{m}$. Next, we initialize the dual curricula $\mathbf{\Psi}^{\gamma}$ and $\mathbf{\Psi}^{\bm \lambda}$ {using the current user-annotated} samples $A_t$ and $B_t$ and the corresponding $\mathbf{Y}$, $\mathbf{V}$ and $\mathbf{U}$. In all our experiments, we empirically set ${\bm \lambda} = \{\lambda^{(j)}_0\}_{j=1}^m=\{ \lambda_0 \}_{j=1}^m=\{-\log0.9\}_{j=1}^m$ for each individual classifier, and we utilize a heuristic strategy for parameter updating. Specifically, for the $q$-th iteration, we compute the threshold parameters for Eqn.~(\ref{eq:obj}) as follows:

\begin{small}
\begin{equation}
\label{eq:lambda}
\lambda^{(j)}_q = \left\{
\begin{aligned}
&\lambda^{(j)}_{(q-1)} + \alpha * \eta^{(j)}_q, \ \ \ \ \ \ \text{ } 1 \le q \le  \tau, \\
&\lambda^{(j)}_{(q-1)}, \ \ \ \ \ \ \ \ \ \ \  \ \ \ \ \ \ \ \ \ \ \ \ q > \tau,
\end{aligned}
\right.
\end{equation}
\end{small}where $\eta^{(j)}_q$ is the negative logarithmic value of the average accuracy for the $j$-th classifier in the current iteration on the validation set and $\alpha$ is a parameter that controls the rate at which the threshold increases. Note that the threshold parameter ${\bm \lambda}$ should be eliminated after several updates to ensure the inclusion of a large number of unlabeled samples. Thus, we introduce an empirical threshold $\tau$ such that ${\bm \lambda}$ is updated only when $q \le \tau$. {Meanwhile, we empirically set a threshold parameter value of $\gamma$ = $0.5m$ for each unlabeled sample, where $m$ is the number of defined object categories.}

\begin{algorithm}[t]
\caption{{Active Sample Mining with Switchable Selection Criteria}}
\label{alg:alg_overview}
\begin{algorithmic}[1]
\REQUIRE Input dataset $\{\mathbf{x}_{i}\}_{i=1}^{n}$
\ENSURE Output model parameters $\{\mathbf{W}\}$
\STATE Initialize $\{\mathbf{x}_{i}\}_{i=1}^{n}$ with a pre-trained CNN, the curricula $\mathbf{ \Psi}^{\bm \lambda}$ and $\mathbf{\Psi}^{\gamma}$, $\{\mathbf{y}_{i}\}_{i=1}^{n}$, the latent weight variable sets $\mathbf{V}$ and $\mathbf{U}$, and the threshold parameters $\gamma$ and ${\bm \lambda}=\{\lambda^{(j)}_0\}_{j=1}^m$.
\STATE
\textbf{while} true \textbf{do} \\
\STATE \ \ \ \ \textbf{for all} mini-batches t = $1,...,T$ \textbf{do}
\STATE \ \ \ \ \ \ \ Update $\mathbf{W}$ via network fine-tuning using Eqn.~(\ref{eq:finetune});
\STATE \ \ \ \ \ \ \ Update $\mathbf{V}$ and $\mathbf{U}$ using Eqn.~(\ref{eq:solution});
\STATE \ \ \ \ \ \ \ Update $\{\mathbf{y}_i\}_{\mathbf{v}_i \ge u_i}$ in a self-learning manner using Eqn.~(\ref{eq:ss});
\STATE \ \ \ \ \textbf{end for}
\STATE \ \ \ \ Update the low-confidence sample sets $A_t$ and $B_t$;
\STATE \ \ \ \ \textbf{if} $A_t \cup B_t$ is not empty \textbf{do}
\STATE \ \ \ \ \ \ \ Update $\{\mathbf{y}_i\}_{i \in A_t}$ via AL;
\STATE \ \ \ \ \ \ \ Update $\mathbf{ \Psi}^{\bm \lambda}$ and $\mathbf{\Psi}^{\gamma}$ using Eqn.~(\ref{eq:c1}) and Eqn.~(\ref{eq:c2});
\STATE \ \ \ \ \textbf{else}
\STATE \ \ \ \ \ \ \ \textbf{break};
\STATE \ \ \ \ \textbf{end if}
\STATE \ \ \ \ Every $\beta$ iterations, update ${\bm \lambda}$ using Eqn.~(\ref{eq:lambda});
\STATE\textbf{end while}
\RETURN $\{\mathbf{W}\}$;
\end{algorithmic}
\end{algorithm}
The entire algorithm {is} summarized in Algorithm~\ref{alg:alg_overview}. It can be easily seen that this algorithm closely follows the pipeline of our {ASM} framework as depicted in Fig.~\ref{fig:architecture}. {Our {ASM} framework includes latent weight variable inference and unlabeled sample category handling; hence, it is impossible to explicitly provide a theoretical guarantee of convergence. However, it is clear that the convergence behavior is determined by two major factors: the capacity of our network and the training samples selected in the last stage, where $\bm \lambda$ and $\gamma$ define how many samples are ultimately incorporated into the data used for model training. Moreover, we have further imposed an adaptive threshold $\epsilon$ to create a marginal gap between the samples selected for the AL and SL processes. This marginal gap contains those unlabeled samples that are not selected for either the AL process or the SL process. Thus, our model can converge to a local optimum when all unlabeled samples are located within this marginal gap. In practice, our model does not require all unlabeled samples for training, and we can obtain a reasonably stable and reliable object detector within a certain maximum number of training iterations.}

\section{Experiments}
\label{sec:exper}

\subsection{Experimental Setup}
\textbf{Datasets and Parameter Settings:} To validate our {ASM} framework, we conducted experiments on the public PASCAL VOC 2007/2012 benchmarks~\cite{voc2007}, whose data are typically divided into the categories ``train'', ``val'' and ``test''. To evaluate the performance on these benchmarks {under the cost-effective object detection scenario, we define the following evaluation protocol:} we used {the data from} the VOC 2007 train and val sets as the initial annotated samples, whereas the VOC 2012 train and val sets were treated as unlabeled data. The active user annotation process consisted of fetching the annotations from the VOC 2012 train and val sets. Moreover, we used the object detection dataset COCO~\cite{coco14ECCV} as `secondary' unlabeled data. In other words, we performed sample mining on COCO only when all VOC 2012 train/val annotations had been used. {For evaluation}, we adopted the PASCAL challenge protocol: a correct detection should share more than 0.5 IoU with the ground-truth bounding box. The performances were evaluated using the mean average precision (mAP) metric. 

\begin{figure}[tbp]
\center
\includegraphics[width=0.7 \columnwidth]{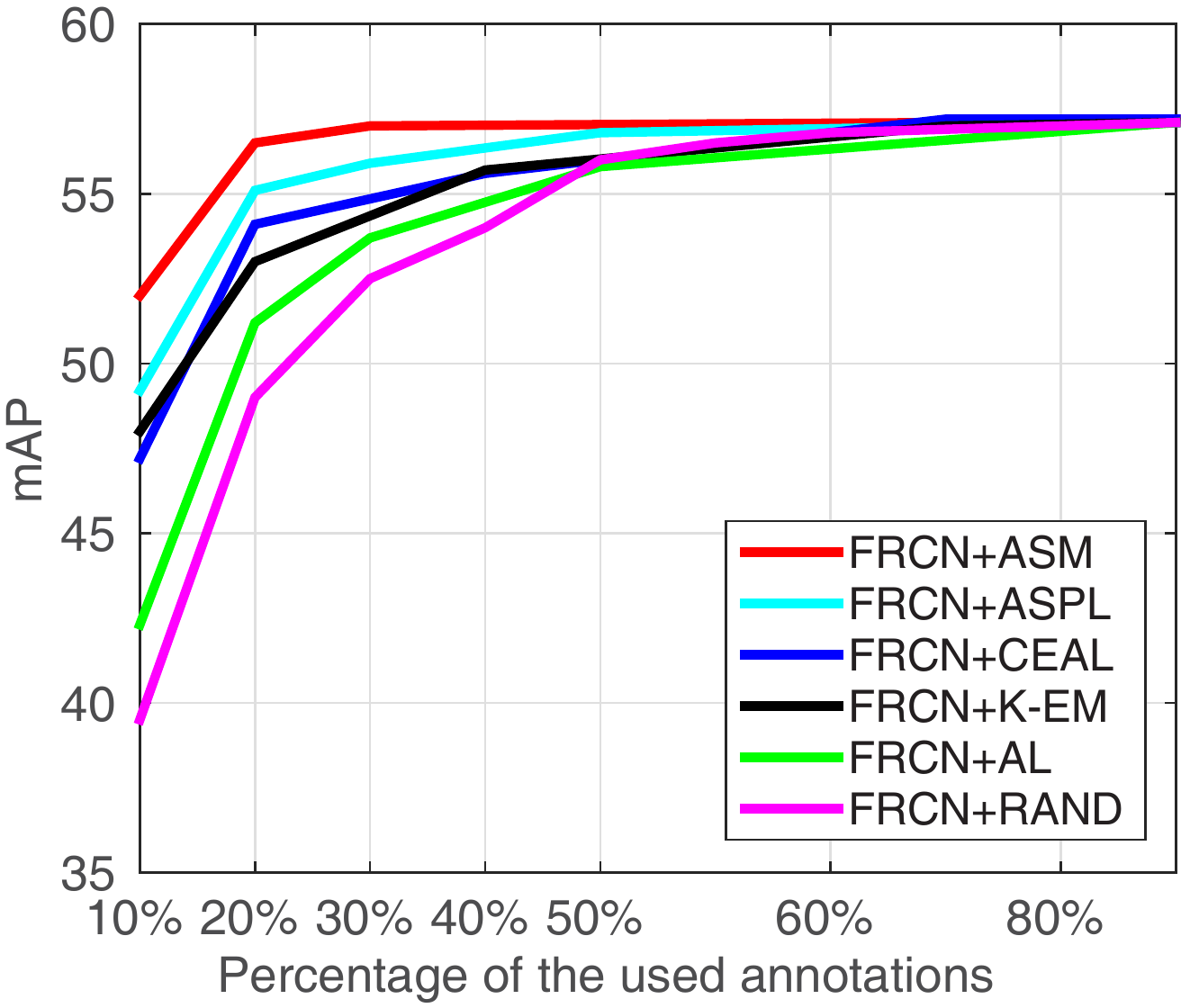}
\vspace{-10pt}
\caption{{Quantitative comparison of detection performance (mAP) on the VOC 2007 test set.}}\label{fig:cmp}
\vspace{-15pt}
\end{figure}

In all experiments, we trained $20$ object and background detectors on the VOC 2007/2012 benchmarks and set the parameters \{$\beta$, $k$, $\tau$, $\alpha$\} to \{10000, 50, 5, 0.08\}, respectively. The training strategy was the same as that described in~\cite{frcn} and \cite{rfcn16NIPS}. Specifically, the fine-tuning of the model, including the region proposal network (RPN), was performed using 4 GPUs (mini-batch size = 4) and a learning rate of 0.001 with a weight decay of 0.0005 and a momentum of 0.9. We employed multi-scale training in all the experiments. After shuffling labeled and unlabeled images at the beginning, our model randomly selects 4 images and {employs the RPN} to generate 300 unlabeled region proposals from each selected image. Then, our model adaptively decides to automatically pseudo-label or request active users to manually identify these proposals. Finally, the pseudo-labeled proposals and manual annotations are combined together to fine-tune our model (including RPN). In the testing phase, we follow the methods described in~\cite{spp15PAMI, rfcn16NIPS} to perform multi-scale inference. Note that the main difference in our training setting is that we treated the COCO train and val sets as unlabeled data for mining instead of pre-training the network. 

\begin{table}[tbp]
\footnotesize
\center
\setlength{\tabcolsep}{2pt}
\caption{Test set mAP results for VOC 2007/2012 obtained using the RFCN~\cite{rfcn16NIPS} pipeline. Annotation key: `initial` denotes the initial annotations, where `07' represents the annotations from the VOC 2007 train/val sets and `07+' represents the annotations from the VOC 2007 train/val/test sets; `annotated' denotes the percentage of appended object annotations from the VOC 2012 train/val sets relative to the number of initial annotations, while `pseudo' denotes the percentage of pseudo-labeled object proposals from the VOC 2012 train/val sets relative to the number of initial annotations.}\label{tab:07mAP12}\label{tab:12mAP07}\label{tab:ss}\label{tab:al}\label{tab:trivial}
\vspace{-5pt}
{\begin{tabular}{c|c| c c | c | c }
\hline
\hline
& Method & initial & test & annotated & mAP  \\
\hline
\multirow{13}*{(a)} & Faster RCNN & 07 & 07 & 100\% & 76.4 \\
& SSD513 & 07 & 07 & 100\%  & 80.6   \\ 
& RFCN & 07 & 07 & 0\% & 73.9 \\
& RFCN+RAND & 07 & 07 & 20\%  & 74.8 \\
& RFCN+RAND & 07 & 07 & 60\% & 76.5 \\
& RFCN+RAND & 07 & 07 & 100\% & 77.4  \\
& RFCN+RAND & 07 & 07 & 200\% & 79.8 \\
& RFCN+AL & 07 & 07 & 20\% & 77.6 \\
& RFCN+AL & 07 & 07 & 60\%  &78.4\\
& RFCN+AL & 07 & 07 & 100\%  &78.8  \\
& RFCN+{ASM} & 07 & 07 & 20\% & 78.1 \\
& RFCN+{ASM} & 07 & 07 & 60\% & 79.3  \\
& RFCN+{ASM} & 07 & 07 & 100\% & 79.6  \\
& RFCN+{ASM} & 07 & 07 & 200\% & \bf 81.8 \\
\hline
\multirow{12}*{(b)}& RFCN & 07+ & 12 & 0\% &  69.1   \\
& RFCN+RAND & 07+ & 12 & 10\% & 70.5 \\
& RFCN+RAND & 07+ & 12 & 30\% & 73.0  \\
& RFCN+RAND & 07+ & 12 & 50\% & 75.8  \\
& RFCN+RAND & 07+ & 12 & 100\% & 77.4  \\
& RFCN+AL & 07+ & 12 & 10\%  & 73.8  \\
& RFCN+AL & 07+ & 12  & 30\%  &  75.9  \\
& RFCN+AL& 07+ & 12  & 50\%  &76.2 \\
& RFCN+{ASM} & 07+ & 12 & 10\%  & 75.4  \\
& RFCN+{ASM} & 07+ & 12 & 30\% & 76.4  \\
& RFCN+{ASM} & 07+ & 12 & 50\% & 77.2  \\
& RFCN+{ASM} & 07+ & 12 & 100\% & \bf 78.3 \\

\hline
\hline
& Method & initial & test & pseudo & mAP  \\
\hline
\multirow{12}*{(c)} & RFCN & 07 & 07 & 0\%  & 73.9 \\
& RFCN+SL & 07 & 07 & 340\% &  77.5  \\
& RFCN+SL+AL & 07 & 07 & 460\% & 77.8  \\
& RFCN+AL+SL & 07 & 07 & 500\% & 78.2 \\
& RFCN+{ASM}~& 07 & 07 & 340\%  &  78.1 \\
& RFCN+{ASM}  & 07 & 07 & 400\% & \bf 79.3  \\
& RFCN & 07+ & 12 & 0\% & 69.1  \\
& RFCN+SL & 07+ & 12  & 130\%&  75.1 \\
& RFCN+SL+AL & 07+ & 12 & 230\% & 76.1  \\
& RFCN+AL+SL & 07+ & 12 & 300\% & 76.7  \\
& RFCN+{ASM}& 07+ & 12 & 130\% & 75.4 \\
& RFCN+{ASM} & 07+ & 12 & 190\% & \bf 77.2    \\
\hline
\hline
\end{tabular}}
\vspace{-10pt}
\end{table}

\textbf{Compared Approaches:} 
To validate the proposed {ASM} framework, we compared it with the CEAL~\cite{ceal16tcsvt} and K-EM~\cite{em17} approaches using the FRCN pipeline with AlexNet~\cite{alexnet12NIPS} (well pre-trained on ImageNet). Note that because the CEAL approach is designed for image classification (i.e, it is not mini-batch friendly), we extended it for object detection by alternately performing sample selection (i.e., informative sample annotation via the least confidence criterion and high-confidence sample pseudo-labeling) and CNN fine-tuning. We directly used the results of K-EM~\cite{em17} as reported in~\cite{em17}. We use the abbreviations ``FRCN+{ASM}'', ``FRCN+CEAL'', and ``FRCN+K-EM'', respectively, to denote these methods. We also included a baseline method ``FRCN+RAND'', in which region proposals are randomly selected for user annotations. To demonstrate that our {ASM} approach can be generalized to different network architectures and object recognition frameworks, we {also} incorporated our {ASM} method into the FRCN pipeline~\cite{frcn} with VGGNet~\cite{vgg} (well pre-trained on ImageNet) and the new state-of-the-art RFCN pipeline~\cite{rfcn16NIPS} with ResNet-101~\cite{He_2016_CVPR} (well pre-trained on ImageNet). We use the abbreviations ``FRCN+{ASM}'' and ``RFCN+{ASM}'', respectively, to denote these variants of our {ASM} method. {For a fair comparison, these methods all share the same training and testing settings.} Moreover, other recently proposed methods (i.e., Faster RCNN~\cite{ren2015faster} and SSD513~\cite{ssd16ECCV}) with the same ResNet architecture were also considered for comparison.

\subsection{Comparison Results}

Fig.~\ref{fig:cmp} illustrates the detection performance achieved using the FRCN pipeline with AlexNet~\cite{alexnet12NIPS} on the VOC 2007 test set. For a fair comparison, we initialize all the methods by providing only 5\% annotations, and allow FRCN+Ours, FRCN+CEAL and FRCN+ASPL to mine unlabeled samples only from the VOC 2007 train/val set. To fairly compare with FRCN+K-EM, we perform training and testing under a single image scale for the other methods. 

As shown, given a gradually increasing number of user annotations, our {ASM} method consistently performs better than the CEAL and K-EM approaches by clear margins. Specifically, our FRCN+{ASM} method can achieve a performance equivalent to that of a fully supervised method (i.e., FRCN with 100\% user annotations) with only approximately 30\% of the user annotations, whereas FRCN+CEAL and FRCN+K-EM require nearly 70\% user annotations. These results demonstrate the superior performance of our ASM.

\begin{figure*}[ptbp]
\center
\includegraphics[width=0.9 \textwidth]{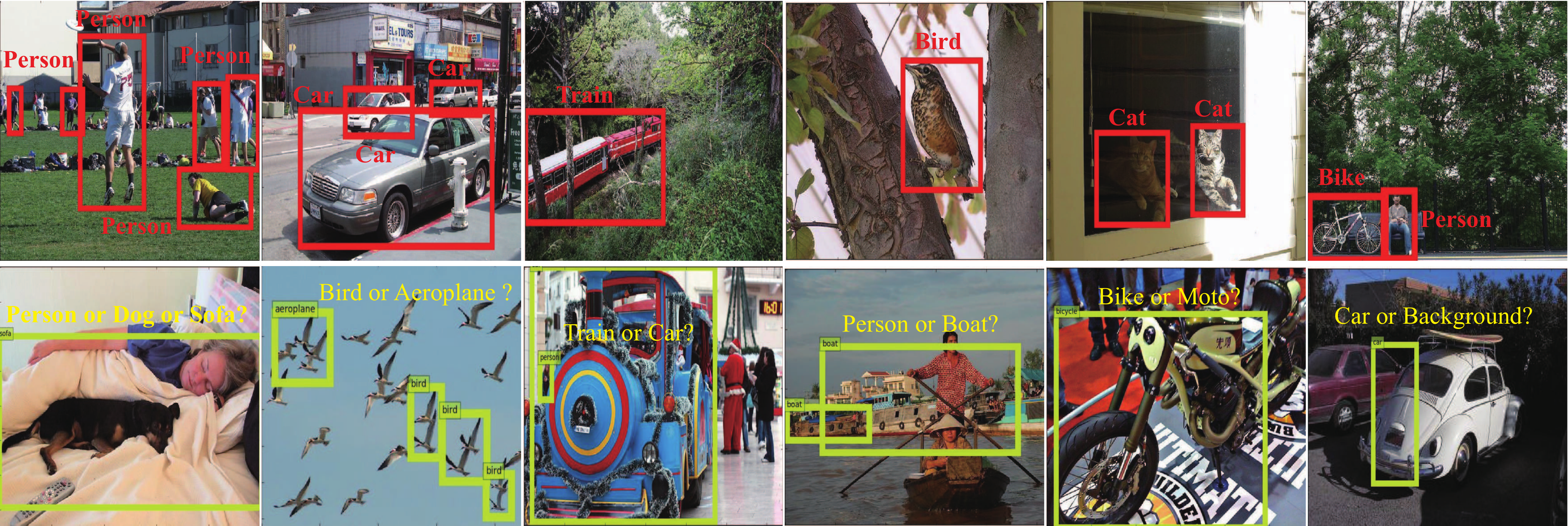}
\vspace{-5pt}
\caption{Selected examples from the COCO dataset. The first row shows high-confidence region proposals with pseudo-labels in red; the bottom row shows low-confidence region proposals in yellow, which required annotations from active users.}\label{fig:visual}
\vspace{-10pt}
\end{figure*}

To demonstrate the feasibility and great potential of our {ASM} approach for use on large-scale detection benchmarks, we conducted experiments on fine-tuning the RFCN model with ResNet-101~\cite{He_2016_CVPR} on the VOC 2007/2012 benchmark using our {ASM} approach and compared the results with those of the baseline RFCN with randomly selected annotations. The results obtained on the VOC 2007 and 2012 test sets are summarized in Tab.~\ref{tab:07mAP12} (a)(b), respectively. By controlling the number of training iterations, the performance of RFCN+{ASM} could be tested with different numbers of annotations (e.g., 20\%, 60\% and 100\% on the VOC 2007 test set). Note that the annotation percentages reported in these tables represent the additional annotations fetched from the VOC 2012 train/val sets for fine-tuning, in addition to the initial annotations (i.e., those from the VOC 2007 train/val sets), whereas in later tables, `pseudo' denotes the percentage of pseudo-labeled object proposals from VOC 2012 and COCO train/val images.

As the number of annotations increases, both RFCN+RAND and RFCN+{ASM} gradually achieve higher detection accuracy. However, as shown in Tab.~\ref{tab:07mAP12} (a)(b), our {ASM} approach consistently outperforms the baseline RFCN+RAND method under all annotation conditions {by clear margins on both the VOC 2007 and 2012 benchmarks}. These findings confirm the effectiveness of our {ASM} framework. Some examples of the selected high-confidence and low-confidence region proposals are depicted in Fig.~\ref{fig:visual}.

\subsection{Ablative Analysis}
To perform {a} component analysis of the proposed {ASM} {framework, we considered} variants using only the pseudo-labeling of high-confidence samples via the SL process and only the annotation of low-confidence samples via the AL process, denoted by ``FRCN+SL''/``RFCN+SL'' and ``FRCN+AL''/``RFCN+AL'', respectively. The only difference {between these methods} lies in how they treat the unlabeled object proposals. Taking the RFCN framework as an example, RFCN+SL learns in a purely self-learning fashion in accordance with Eqn.~(\ref{eq:ss}) until training ends, whereas RFCN+AL selectively collects low-confidence proposals, requests annotations, and stops when the annotation threshold is reached. Note that in this setting, the AL process cannot be used to guide the SL process because they are not jointly optimized.

\begin{table}[t]
\footnotesize
\center
\setlength{\tabcolsep}{2.5pt}\vspace{-5pt}
\caption{{Test set mAP results for VOC 2007 {obtained} using the RFCN~\cite{rfcn16NIPS} pipeline. Annotation key: `annotated' denotes the percentage of manual annotations used from the VOC 2007 train/val sets for initialization.}}\label{tab:07mAP07_fewshot}
\vspace{-5pt}
{\begin{tabular}{c | c | c }
\hline
\hline
Method &  annotated  & mAP \\
\hline
RFCN+MSPLD & 10\%  & 61.6 \\
RFCN+MSPLD & 30\%  & 68.2 \\
RFCN+MSPLD & 50\%   & 71.3 \\
RFCN+SL & 10\%   & 64.6	\\
RFCN+SL & 30\%   & 70.0	\\
RFCN+SL & 50\%   & 73.3 \\
\hline
RFCN & 100\% & \bf 73.9 \\
\hline
\hline
\end{tabular}}
\vspace{-10pt}
\end{table}



Tab.~\ref{tab:ss} (c) lists the mAP scores of the baseline RFCN method and RFCN+SL. As shown, given the same number of annotations during initialization, RFCN+SL performs significantly better than RFCN on both the VOC 2007 and VOC 2012 test sets. Specifically, RFCN+SL achieves a 3.6\% performance improvement (77.5\% vs. 73.9\%) by pseudo-labeling approximately 340\% of the high-confidence region proposals for training on the VOC 2007 benchmark, whereas a consistent performance gain of approximately 6\% (75.1\% vs. 69.1\%) is obtained on the VOC 2012 test set by pseudo-labeling high-confidence region proposals at a rate of approximately 130\%.
These results validate the significant contribution of the proposed SL process. Meanwhile, with the inclusion of the annotation of low-confidence samples via AL, RFCN+{ASM} performs slightly better than RFCN+SL on both the VOC 2007 and VOC 2012 test sets.

Moreover, we also compared our RFCN+SL method with a new state-of-the-art method, namely, RFCN+MSPLD~\cite{MSPLD}, {under} a few-shot object learning setting. {For a fair comparison,} the same percentage of manual annotations was used for model initialization in both our RFCN+SL method and the competing RFCN+MSPLD method. Then, the pseudo-labeling mechanism of each method was applied to fine-tune their models. The results for RFCN+MSPLD were obtained from the authors of \cite{MSPLD}. The results are compared in Tab.~\ref{tab:07mAP07_fewshot}. As shown in this table, RFCN+SL consistently outperforms RFCN+MSPLD by clear margins at all annotation percentages for model initialization. These findings further demonstrate the superiority of the proposed SL process.

To clarify the contribution of the proposed AL process, we conducted further experiments to compare the detection performances of RFCN+RAND and RFCN+AL under several different annotation-appending settings on the VOC 2007/2012 benchmarks. As shown by the results in Tab.~\ref{tab:al} (a)(b), RFCN+AL consistently outperforms the baseline RFCN+RAND, albeit by a small margin. Although the improvements are minor, the AL process is still beneficial in enhancing object detection. This slight improvement occurs because the informative samples with the greatest potential for improving performance lie in the long tail of the sample distribution, as reported in \cite{Wang_2017_CVPR}. Therefore, it is necessary either to obtain abundant training samples by asking active users to provide labels or to find other assistance. Fortunately, {the pseudo-labeling of high-confidence samples via our SL process is an effective way to address this issue.}

To prove that our {ASM framework} is critical and nontrivial, we also compared it with four simple and straightforward baselines, i.e., ``FRCN+AL+SL'', ``RFCN+AL+SL'', ``FRCN+SL+AL'' and ``RFCN+SL+AL'', by implementing the SL and AL processes in a simple sequential manner; e.g., RFCN+AL+SL first performs low-confidence sample annotation via AL and then runs using the SL process until training ends, whereas RFCN+SL+AL first learns using the same SL process as RFCN+SL and then continues in the same AL fashion as RFCN+AL. Therefore, these four methods combine the AL and SL processes {in a straightforward fashion rather than fusing them adaptively}. By contrast, under the control of the selector function, our RFCN+{ASM} method {selectively} switches between recognizing high-confidence proposals via the SL process and discovering low-confidence proposals under the proposed dual curricula for the next user annotation phase. The results, being listed in Tab.~\ref{tab:trivial} (c) and Tab.~\ref{tab:07mAP}, show that the proposed RFCN+{ASM} and FRCN+{ASM} methods outperform both of their corresponding baselines by clear margins. These findings validate the effectiveness of the AL and SL fusion design in our {ASM} framework.   

\subsection{Ablation Study Without Network Fine-tuning}
To {permit clear observation of} the effect of our {ASM} framework on sample mining in a fixed feature space, approaches without network fine-tuning were compared. Specifically, we pre-trained FRCN~\cite{frcn} with VGGNet~\cite{vgg} on the VOC 2012 train/val sets to obtain a good feature representation. Note that to avoid overusing the annotations from the VOC 2012 benchmark, we discarded the parameters of the softmax classifier layer from VGGNet and then used the results of employing 30\% and 57\% of the VOC 2007 train/val annotations to train the softmax classifier as a reference.  We initialized all methods with the same 30\% of the annotations from the VOC 2007 train/val sets and then allowed them to incrementally exploit the ``unlabeled'' samples (i.e., for which the annotations were not given). For all methods, the training was terminated when no low-confidence samples could be found (i.e., all unlabeled samples {had} been clearly classified).

As shown in Tab.~\ref{tab:07mAP}, our FRCN+{ASM} method achieves the highest mAP.  Specifically, FRCN+{ASM} obtains a mAP result that is 6.5\% and 4.3\% higher than those of FRCN when using 30\% and 57\% of the annotations, respectively. The FRCN+AL result demonstrates that AL can improve the detection mAP by using an additional 27\% of the annotations. The performance gain of FRCN+SL is marginal because some of the category classifiers are heavily degraded (such as the bottle, person and plant category classifiers). This degradation occurs because certain classifiers are easily misled by outliers without active user intervention. Similar degradation also occurs in FRCN+SL+AL. Our FRCN+{ASM} method outperforms FRCN+SL by a clear margin using only an additional 5\% of the annotations, and it performs 2\% better than FRCN+AL using 12\% fewer annotations. In summary, our FRCN+{ASM} method achieves a higher mAP score while requiring fewer annotations, thereby demonstrating its superior performance.

\begin{table}[t]
\footnotesize
\center
\setlength{\tabcolsep}{2pt}
\caption{Test set mAP results for VOC 2007 obtained using the FRCN~\cite{ren2015faster} pipeline with VGGNet. Annotation key: `append' denotes the number of appended annotations requested via AL as a percentage of all annotations from the VOC 2007 train/val sets. For all methods, 30\% of the annotations from the VOC 2007 train/val sets were used for initialization.}\label{tab:07mAP}
\vspace{-8pt}
\begin{tabular}{c | c | c }
\hline
\hline
Method & append & mAP \\
\hline
FRCN  & 0\% &  62.0 \\
FRCN+RAND  & 27\% & 64.2  \\
FRCN+SL  & 0\% & 62.5 \\
FRCN+AL  & 27\% & 66.3 \\
FRCN+SL+AL  & 12\%  & 65.4\\
FRCN+AL+SL  & 27\% & 67.7  \\
FRCN+{ASM} & 5\% & \bf 68.5 \\
\hline
\hline
\end{tabular}
\vspace{-5pt}
\end{table}

\begin{figure}[tbp]
\center
\includegraphics[width = 0.6 \columnwidth]{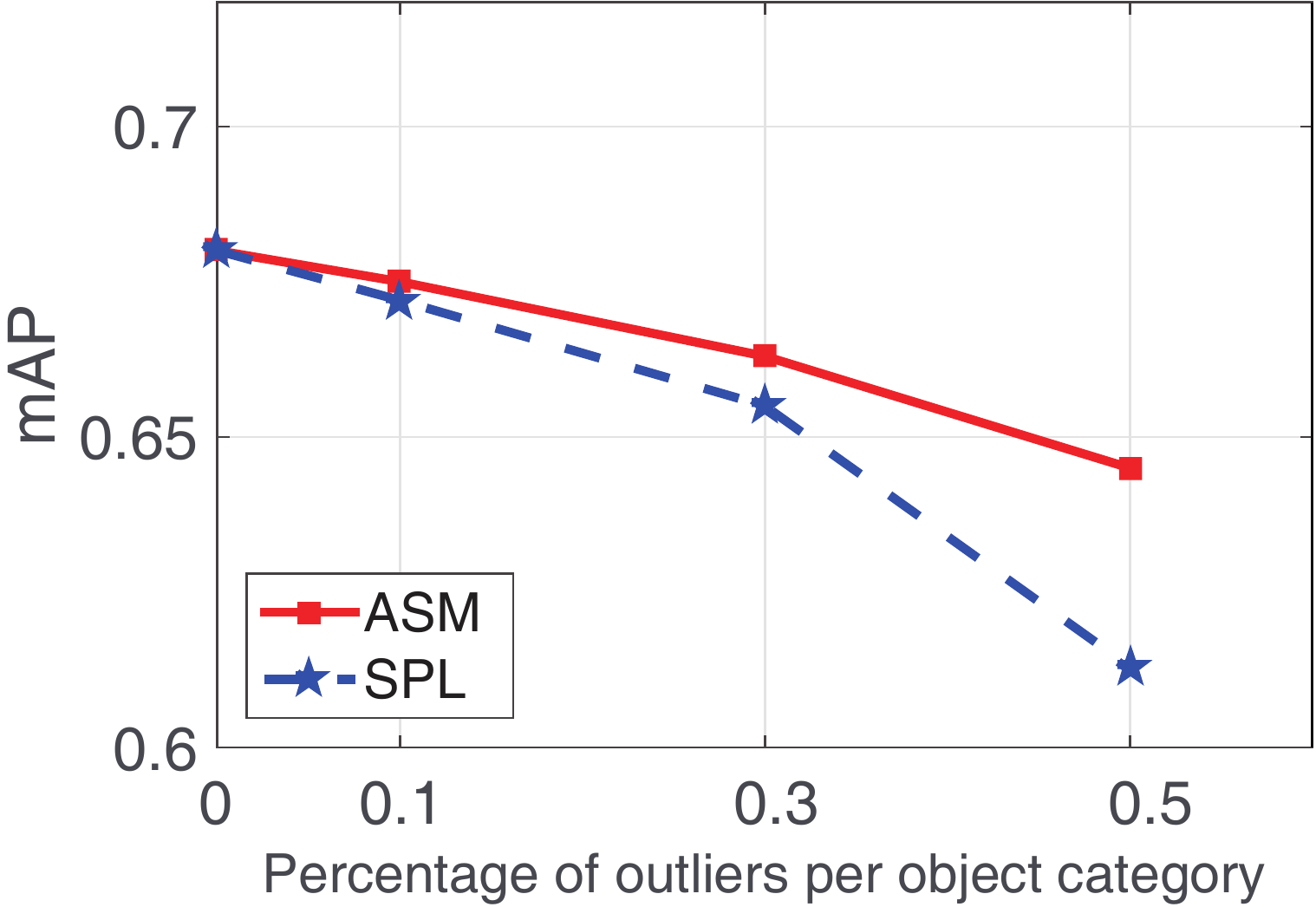}
\vspace{-5pt}
\caption{The accuracy-outlier curves obtained on the VOC 2007 test set by adding outliers to the training samples.}
\vspace{-8pt}
\label{fig:aoc}
\end{figure}

\begin{table}[t]
\footnotesize
\center
\setlength{\tabcolsep}{2pt}
\caption{{Test set mAP results for VOC 2007 obtained using the RFCN~\cite{rfcn16NIPS} pipeline. Annotation key: `annotated' denotes the percentage of manual annotations used from the VOC 2007 train/val and VOC 2012 train/val sets; `pseudo' denotes the percentage of pseudo-labeled region proposals relative to the number of VOC 2007 train/val annotations.}}\label{tab:time}
\vspace{-8pt}
{\begin{tabular}{c |c  c | c | c c}
\hline
\hline
Method & annotated & pseudo & mAP & Training Time & Testing Time \\
& & & & (seconds/image) & (seconds/image) \\
\hline
RFCN & 100\% & 0\% & 79.8 & 0.42 & 0.12  \\
RFCN+{ASM} & 30\% & 400\% & 79.3 & 0.44 & 0.12 \\
RFCN+{ASM} & 50\% & 500\% & 79.9 & 0.46 & 0.12 \\
RFCN+{ASM} & 50\% & 600\% & 80.9 & 0.49 & 0.12 \\
RFCN+{ASM} & 100\% & 1000\% &\bf 81.8 & 0.63 & 0.12 \\
\hline
\hline
\end{tabular}}
\vspace{-10pt}
\end{table}

\begin{figure*}[tbp]
\center
\includegraphics[width = 0.7 \textwidth]{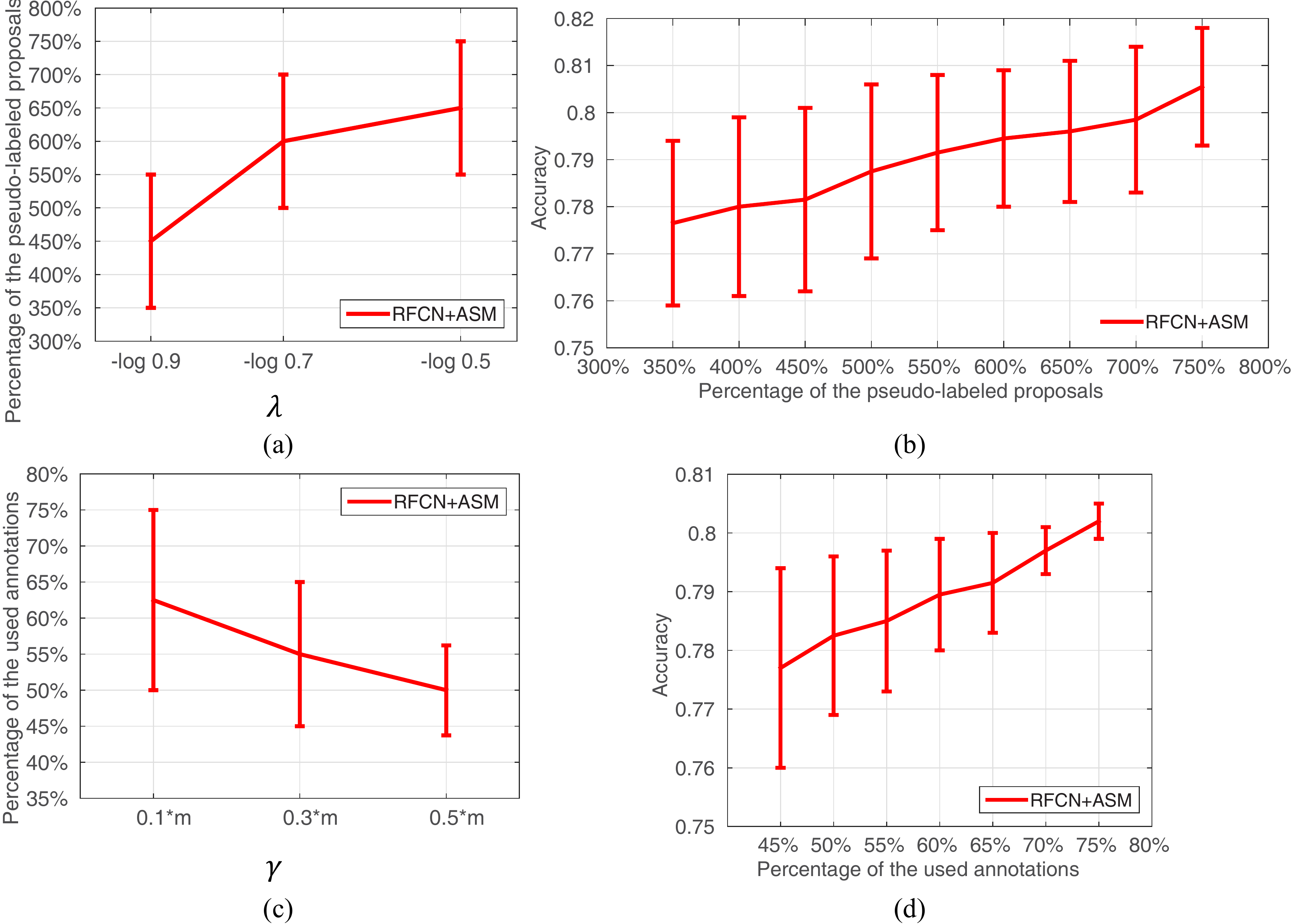}
\vspace{-10pt}
\caption{{Sensitivity analysis of the hyperparameters (a) $\lambda_0$ and (c) $\gamma$. As shown, a larger $\lambda_0$ represents a higher threshold for defining high-confidence samples, whereas a larger $\gamma$ represents a higher threshold for selecting low-confidence samples. Meanwhile, (b) illustrates the detection accuracy corresponding to (a) under different percentages of pseudo-labels, and (d) illustrates the detection accuracy corresponding to (c) under different percentages of used annotations.} }\label{fig:hyper}
\vspace{-10pt}
\end{figure*}

\subsection{Robustness Analysis}
To further demonstrate the potential of our {ASM} framework to suppress the influence of outliers/noisy samples (including samples belonging to undefined object categories), we measured the robustness of our method against outliers. We employed the {\em accuracy-outlier curve} (AOC) to evaluate the robustness of our method and the SPL baseline method reported in~\cite{spcl}. The AOC plots the accuracy w.r.t the percentage of outliers per object category. In this analysis, we considered three sources of outliers: i) a number of samples with manual annotations {were randomly selected and assigned uncorrected} category labels with respect to the ground-truth annotations to generate noisy samples/outliers, ii) incorrectly annotated samples among the manual annotations were treated as outliers, and iii) automatically pseudo-labeled samples belonging to the undefined object category were also considered to be outliers. To ensure a reasonable evaluation, we initialized both our {ASM} and SPL methods using the same network parameters under the FRCN pipeline. Then, we allowed the parameters of our {ASM} and SPL models to be trained in their own ways with different percentages of outliers per object category. As illustrated in Fig.~\ref{fig:aoc}, our {ASM} method is much more robust than the SPL method under different outlier percentages per object category. By incorporating AL-based guidance for the SL process, our {ASM} framework achieves stable results when faced with outliers and noisy samples.

\subsection{Time Efficiency Analysis}
We also compared the efficiency of our ASM framework and the original RFCN model in both the training and testing phases. The results of this comparison are presented in Tab.~\ref{tab:time}, which shows that due to sharing the same object detection pipeline, our RFCN+{ASM} and the original RFCN have identical average time costs for testing a given image under a single image scale without flipping. It is obvious that our RFCN+{ASM} (30\% used annotations + 400\% pseudo-labels) and the original RFCN also have approximately the same time cost for training. These findings confirm that the additional time complexity introduced by our model can be ignored, thanks to our proposed closed-form solution for updating the latent weight variables $\mathbf{U}$ and $\mathbf{V}$. 

However, as the threshold parameters ${\bm \lambda}$ increase, more high-confidence region proposals from each unlabeled or partially labeled image will be pseudo-labeled for network fine-tuning. This will result in a higher computational cost for network fine-tuning. Nevertheless, our model requires only a 50\% increase in time cost (0.63 vs. 0.42 second/image) to reach the 1000\% pseudo-annotation objective, which offers a significant performance gain (approximately 2\% mAP) against the original RFCN. Therefore, the additional time complexity introduced by our model is moderate and acceptable.

\begin{table}[!t]
\footnotesize
\center
\setlength{\tabcolsep}{1.5pt}
\caption{{Test set mAP results for VOC 2007 obtained using the RFCN~\cite{rfcn16NIPS} pipeline. Annotation key: `m' denotes the number of defined object categories; `annotated' denotes the percentage of manual annotations used from the VOC 2007 train/val sets.}}\label{tab:07mAP12_unseen}
\vspace{-5pt}
{\begin{tabular}{c | c c | c | c }
\hline
\hline
Method & $m$ & unseen & annotated & mAP \\
\hline
RFCN+CEAL & 15 & 5 & 30\% & 70.1 \\
RFCN+ASPL & 15 & 5 & 30\% & 75.5 \\
RFCN+{ASM} & 15 & 5 & 30\% & 78.7  \\
RFCN & 15 & 0 & 100\% &\bf  79.7 \\
\hline
RFCN+CEAL & 20 & 5 & 30\% & 73.5 \\
RFCN+ASPL & 20 & 5 & 30\% & 76.0 \\
RFCN+{ASM} & 20 & 5 & 30\% & 78.1  \\
RFCN & 20 & 0 & 100\% &\bf  79.3 \\
\hline
\hline
\end{tabular}}
\vspace{-10pt}
\end{table}

\subsection{Unseen Object Category Support}
To confirm the effectiveness of our model in supporting unseen object categories, we conducted two further evaluations on the VOC 2007 benchmark, which contains 20 object categories. In the first evaluation, we treated only 15 of these object categories as valid categories for detection, and the remaining 5 categories were treated as unseen ones. In the second evaluation, we additionally added several samples in 5 new categories from the Microsoft COCO dataset into the training set as unseen noisy samples. 

To demonstrate the superior performance of our ASM framework, we compared RFCN+{ASM} with the closely related methods ``RFCN+CEAL'' and ``RFCN+ASPL''. To ensure a fair comparison, all methods were initialized with the same annotations (i.e., 10\%) and allowed to fetch the same number of additional annotations (i.e., 20\%) during the AL process. Thus, the remaining 70\% samples were treated as unlabeled. The only difference among these methods is how they select and assign pseudo-labels to these unlabeled samples. The results of these evaluations are compared in Tab.~\ref{tab:07mAP12_unseen}. 

In contrast to the upper-bound baseline ``RFCN (ALL)'', in which all VOC 2007 and VOC 2012 train/val annotations (without unseen categories) were used for training, all methods tested here employed only 30\% of the annotations and assigned pseudo-labels to the remaining unlabeled samples for fine-tuning of the network. As shown in Tab.~\ref{tab:07mAP12_unseen}, the performance of our proposed method is inferior to that of RFCN (ALL) by approximately 1\% and consistently superior to that of RFCN+CEAL and RFCN+ASPL by clear margins in both evaluation settings. These findings confirm the superior ability of our method to overcome the misleading influence of samples that belong to unseen object categories.

\subsection{Hyperparameter Sensitivity}
We further analyzed the sensitivity of hyperparameters $\lambda_0$ and $\gamma$. Specifically, we trained our model with $\lambda_0$ values ranging from -$\log 0.9$ to -$\log 0.5$ with $\gamma$ fixed and with $\gamma$ values ranging from 0.1$\times m$ to 0.5$\times m$ with $\lambda_0$ fixed to perform the sensitivity analyses. The results of the sensitivity analysis for $\lambda_0$ are shown in Fig.~\ref{fig:hyper} (a) and Fig.~\ref{fig:hyper} (b), whereas those for $\gamma$  are shown in Fig.~\ref{fig:hyper} (c) and Fig.~\ref{fig:hyper} (d). $\lambda_0$ serves as the threshold for defining high-confidence samples during model initialization and thus controls the percentage of region proposals that are assigned pseudo-labels. As shown in Fig.~\ref{fig:hyper} (a), far more pseudo-labeled proposals are obtained as $\lambda_0$ increases. Fig.~\ref{fig:hyper} (b) demonstrates that the detection accuracy continuously increases as the percentage of pseudo-labeled proposals grows. Note that although more pseudo-labeled proposals can substantially improve the model performance, the standard deviation of the detection accuracy is still high due to the misleading influence of incorrectly pseudo-labeled proposals. Meanwhile, $\gamma$ serves as the threshold for the selection of low-confidence samples during model initialization and thus is relates to the percentage of annotations used. As shown in Fig.~\ref{fig:hyper} (c), a smaller $\gamma$ value results in requests for more manual annotations. Fig.~\ref{fig:hyper} (d) illustrates that more annotations can lead to a higher detection accuracy with a lower standard deviation. To ensure the {effectiveness of automatically pseudo-labeling} and the cost-effective manual annotation of the representative minority samples, we empirically set \{$\lambda_0$, $\gamma$\}=\{$-\log 0.9$, $0.5m$\}.

\section{Conclusions}
\label{sec:conc}
In this paper, we have introduced a principled active sample mining framework and demonstrated its effectiveness in mining the majority of unlabeled or partially labeled data to boost object detection. In our {ASM} framework, a {self-learning} process, integrated into the AL pipeline with a concise formulation, is employed for retraining the object detectors using {accurately pseudo-labeled object proposals}. Meanwhile, the remaining samples {with low prediction confidence (i.e., high uncertainty) by the current detectors can be annotated} through {the AL process}, which {contributes to} generating reliable and diverse samples and gradually revising the {self-learning} process. By means of the proposed alternating optimization mechanism, our framework {selectively} and seamlessly switches between our {self-learning} process and the AL process for each unlabeled or partially labeled sample. Moreover, two curricula are introduced to guide the pseudo-labeling and annotation processes from dual perspectives. Thus, our {ASM} framework can be used to build effective CNN detectors that require fewer labeled training instances while achieving promising results. In the future, we plan to extend our framework to achieve improvements in other specific types of visual detection using unlabeled videos under the large-scale application scenarios.

\appendices
\section{Proof of Proposition 1}
\label{sec:p1}
$ \mathbb{E}(\mathbf{U},\mathbf{V}; \mathbf{L}(X, \mathbf{Y}; \mathbf{W}),\gamma,{\bm \lambda}) $ can be decoupled as $\sum^n_{i=1} E(u_i,\mathbf{v}_i;\mathcal{L}_i,\gamma, {\bm \lambda})$.
		Since $\forall \ i_i \neq i_2$, $E(u_{i_1},v_{i_1};\mathcal{L}_{i_1},\gamma,{\bm \lambda})$ and $E(u_{i_2},v_{i_2};\mathcal{L}_{i_2},\gamma,{\bm \lambda})$ are independent, we can have   
		\begin{small}
			\begin{displaymath}
			\begin{aligned}
			\min_{\mathbf{V}\in\mathbf{\Psi}^{\bm \lambda}}\mathbb{E}(\mathbf{U},\mathbf{V};\mathbf{L}(X, \mathbf{Y}; \mathbf{W}),\gamma,{\bm \lambda}) &= \sum^n_{i=1} \min_{\mathbf{v}_i\in V^{\bm \lambda}_i} E(u_i,\mathbf{v}_i;\mathcal{L}_i,\gamma,{\bm \lambda}) 
			\\ &= \sum^n_{i=1}  E(u_i,\mathbf{v}^\ast_i;\mathcal{L}_i,\gamma,{\bm \lambda}).
			\end{aligned}
			\end{displaymath}
		\end{small}Furthermore, $\mathbf{V}$ and $\mathbf{U}$ are also independent of each other. Thus, we can similarly obtain
		\begin{small}
			\begin{displaymath}
			\max_{\mathbf{U}\in\mathbf{\Psi}^{\bm \gamma}}\mathbb{E}(\mathbf{U},\mathbf{V}^\ast;\mathbf{L}(X, \mathbf{Y}; \mathbf{W}),\gamma,{\bm \lambda}) = \sum^n_{i=1} \max_{u_i\in U^{\gamma}_i} E(u_i,\mathbf{v}^\ast_i;\mathcal{L}_i,\gamma,{\bm \lambda}).
			\end{displaymath}
		\end{small}
		Hence, we have $\underset{\mathbf{U}\in\mathbf{\Psi}^{\bm \gamma}}{\max} \ \underset{\mathbf{V}\in\mathbf{\Psi}^{\bm \lambda}}{\min} \ \mathbb{E}(\mathbf{U},\mathbf{V};\mathbf{L}(X, \mathbf{Y}; \mathbf{W}),\gamma,{\bm \lambda})$
		\begin{small}
			\begin{displaymath}
			\begin{aligned}
			=
			\underset{\mathbf{U}\in\mathbf{\Psi}^{\bm \gamma}}{\max}\ \sum^n_{i=1} & E(u_i,\mathbf{v}^\ast_i;\mathcal{L}_i,\gamma,{\bm \lambda})
			\\=\sum^n_{i=1} \max_{u_i\in U^\gamma_i}  & \min_{\mathbf{v}_i\in V^{\bm \lambda}_i} E(u_i,\mathbf{v}_i;\mathcal{L}_i,\gamma,{\bm \lambda}).
			\end{aligned}
			\end{displaymath}
		\end{small}
Therefore, optimizing Eqn.~(\ref{eq:obj}) is equivalent to {performing} $\underset{u_i\in U^\gamma_i}{\max} \underset{\mathbf{v}_i\in V^{\bm \lambda}_i}{\min} \  E(u_i,\mathbf{v}_i;\mathcal{L}_i,\gamma,\lambda)$ on each sample $x_i$ in $X$. 
	
{
\section{Proof of Proposition 2}
\label{sec:p2}
	
	Based on Proposition 1, we consider $x_i$ in $X$. Thus, the min-max problem expressed in Eqn.~(\ref{eq:uv}) is transformed as:	
	\begin{small}\begin{equation}\begin{aligned}
		\underset{u_i}{\max} \ \underset{\mathbf{v}_i}{\min} \  E(u_i&,\mathbf{v}_i;\mathcal{L}_i,\gamma,{\bm \lambda}) \\= \sum_{j=1}^{m}\max(u_i,v_i^{(j)})l_i^{(j)} -\gamma u_i &+ \frac{1}{2}\overset{m}{\underset{j=1}{\sum}}\lambda^{(j)}((v_i^{(j)})^2-2v_i^{(j)}) \\  \mathbf{s.t.} \ \ \gamma > 0, u_i \in& \{0,1\}\cap U^{\gamma}_i, \\ \lambda^{(j)} > 0; \ \mathbf{v}_i=\{v_i^{(j)}\}^m_{j=1}& \in [0,\epsilon]^m\cap V^{\bm \lambda}_i \subset [0,1)^m.
		\label{eq6}\end{aligned}
		\end{equation}\end{small}$\forall x_i\in A_{t-1}\cup B_{t-1}$, the $\{u_i,\mathbf{v}_i\}$ are constant. If $x_i\in X/(A_{t-1}\cup B_{t-1})$, then $\{0,1\}\cap U^{\gamma}_i=\{0,1\}, [0,\epsilon]^m\cap V^{\bm \lambda}_i = [0,\epsilon]^m$. Hence, we can solve for the $(u_i,\mathbf{v}_i)$ via Proposition 2. 
		
	When $u_i=1>\max\{v_i^{(j)}\}^m_{j=1}$, Eqn.~(\ref{eq5}) yields
	\begin{small}
		\begin{equation}
		\begin{aligned}
		E_i = u_i\sum_{j=1}^{m}l_i^{(j)} -&\gamma u_i + \frac{1}{2} \overset{m}{\underset{j=1}{\sum}}\lambda^{(j)}((v_i^{(j)})^2-2v_i^{(j)}),
		\end{aligned}~\label{r1}
		\end{equation}
	\end{small}and 
	\begin{small}
		\begin{equation}
		\begin{aligned}
		\frac{\partial E_i}{\partial v^{(j)}_i} = \lambda(v^{(j)}_i-1).
		\end{aligned}~\label{d1}
		\end{equation}
	\end{small}
	When $u_i=0\leq\min\{v_i^{(j)}\}^m_{j=1}$, Eqn.~(\ref{eq5}) yields
	\begin{small}
		\begin{equation}
		\begin{aligned}
		E_i = \sum_{j=1}^{m}v_i^{(j)}l_i^{(j)}& + \frac{1}{2} \overset{m}{\underset{j=1}{\sum}}\lambda^{(j)}((v_i^{(j)})^2-2v_i^{(j)}),
		\end{aligned}~\label{r2}
		\end{equation}
	\end{small}and
	\begin{small}
		\begin{equation}
		\begin{aligned}
		\frac{\partial E_i}{\partial v^{(j)}_i} = l^{(j)}_i+\lambda(v^{(j)}_i-1).
		\end{aligned}~\label{d2}
		\end{equation}
	\end{small}	
	\begin{lem}
		Given Eqn.~(\ref{r2}), with respect to $\mathbf{v}$, the solution is		
		\begin{small}
			\begin{equation}
			\begin{gathered}
			v_i^{(j)} 
			=\left\{
			\begin{aligned}	
			&0 &l^{(j)}_i>\lambda^{(j)}; \\
			&1-\frac{l^{(j)}_i}{\lambda^{(j)}}\ \  &\lambda^{(j)}(1-\epsilon)\leq l^{(j)}_i\leq\lambda^{(j)}; \\
			&\epsilon &l^{(j)}_i<\lambda^{(j)}(1-\epsilon).
			\end{aligned}
			\right.
			\end{gathered}\label{v1}
			\end{equation}
		\end{small}	\label{lem1}	
	\end{lem}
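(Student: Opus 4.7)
The plan is to exploit the separability of the objective $E_i$ across the $m$ coordinates $v_i^{(j)}$: the function in Eqn.~(\ref{r2}) decouples as a sum of single-variable subproblems, one per class index $j$, each of which can be solved independently by a straightforward one-dimensional convex optimization on the interval $[0,\epsilon]$.

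First, I would fix a class index $j$ and isolate the term depending only on $v_i^{(j)}$, namely $g_j(v) = v\, l_i^{(j)} + \tfrac{1}{2}\lambda^{(j)}(v^2 - 2v)$. This is a quadratic with positive leading coefficient $\lambda^{(j)}/2>0$, hence strictly convex, so it has a unique unconstrained minimizer obtained by setting the derivative (already computed in Eqn.~(\ref{d2})) to zero, giving $v^{\text{unc}} = 1 - l_i^{(j)}/\lambda^{(j)}$.

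Next, because the feasibility set is the compact interval $[0,\epsilon]$ (coming from $[0,\epsilon]^m\cap V_i^{\bm\lambda}$) and $g_j$ is strictly convex, the constrained minimizer is simply the projection of $v^{\text{unc}}$ onto $[0,\epsilon]$. I would then carry out the three-way case split: if $v^{\text{unc}}<0$, equivalently $l_i^{(j)}>\lambda^{(j)}$, the minimum is attained at the left endpoint $v=0$; if $v^{\text{unc}}\in[0,\epsilon]$, equivalently $\lambda^{(j)}(1-\epsilon)\le l_i^{(j)}\le\lambda^{(j)}$, the interior critical point $v=1-l_i^{(j)}/\lambda^{(j)}$ is feasible and optimal; and if $v^{\text{unc}}>\epsilon$, equivalently $l_i^{(j)}<\lambda^{(j)}(1-\epsilon)$, the minimum is attained at the right endpoint $v=\epsilon$. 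Matching these three cases to the piecewise formula in Eqn.~(\ref{v1}) concludes the argument.

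There is no serious obstacle here; the only subtlety is correctly identifying the feasible set as $[0,\epsilon]$ rather than $[0,1)$ (since $u_i=0$ places us in the branch where $\mathbf{v}_i$ is further constrained by the adaptive threshold $\epsilon$), and verifying that the inequalities defining the three regimes in Eqn.~(\ref{v1}) partition the parameter space without gap or overlap. Both are routine algebraic checks given the derivative expression in Eqn.~(\ref{d2}).
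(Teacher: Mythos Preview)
Your proposal is correct and matches the paper's approach: both reduce to a one-variable convex quadratic minimization over $[0,\epsilon]$ using the derivative in Eqn.~(\ref{d2}). The only cosmetic difference is that the paper argues each case by checking the sign of $\partial E_i/\partial v_i^{(j)}$ throughout the interval, whereas you phrase the same three-way split as projecting the unconstrained minimizer $1-l_i^{(j)}/\lambda^{(j)}$ onto $[0,\epsilon]$.
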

	\begin{proof}
		When $l^{(j)}_i>\lambda^{(j)}$, 
		\begin{small}
			\begin{displaymath}
			\begin{aligned}
			\frac{\partial E_i}{\partial v^{(j)}_i} &= l^{(j)}_i+\lambda^{(j)}(v^{(j)}_i-1) > \lambda^{(j)} + \lambda^{(j)}(v^{(j)}_i-1) \geq 0,
			\end{aligned}
			\end{displaymath}
		\end{small}which leads to $\arg\underset{v\in[0,\epsilon]}{\min} E_i = 0$. When $l^{(j)}_i\leq\lambda^{(j)}(1-\epsilon)$,
		\begin{small}
			\begin{displaymath}
			\begin{aligned}
			\frac{\partial E_i}{\partial v^{(j)}_i} =l^{(j)}_i+\lambda(v^{(j)}_i-1) &< \lambda^{(j)}(1-\epsilon) + \lambda^{(j)}(v^{(j)}_i-1) \\ &\leq \lambda^{(j)}(v^{(j)}_i-\epsilon) \leq 0,
			\end{aligned}
			\end{displaymath}
		\end{small}which leads to $\arg\underset{v\in[0,\epsilon]}{\min} E_i = \epsilon$. When $\lambda^{(j)}(1-\epsilon)\leq l^{(j)}_i\leq\lambda^{(j)}$, from $\frac{\partial E_i}{\partial v^{(j)}_i} = 0$, we have $\arg\underset{v\in[0,\epsilon]}{\min} E_i = 1-\frac{l^{(j)}_i}{\lambda^{(j)}}\in[0,\epsilon]$. This concludes the proof.

	\end{proof}
\begin{lem}
		Given any iteration $t$, for the sample $x_i$, suppose that $u_i^t\in\{0,1\}$ and $\mathbf{v}_i^t\in [0,\epsilon]^m \subset [0,1)^m$. Then, $\sum_{j=1}^{m}l_i^{(j)} > \frac{\gamma}{1-\epsilon}$ leads to $u_i^{t+1}=u_i^{t+2}=1$, and $\sum_{j=1}^{m}l_i^{(j)} < \gamma$ leads to $u_i^{t+1}=u_i^{t+2}= 0$.\label{lem2}
	\end{lem}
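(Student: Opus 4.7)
The plan is to exploit the alternating structure of the max-min update $\max_{u_i}\min_{\mathbf{v}_i} E(u_i,\mathbf{v}_i;\mathcal{L}_i,\gamma,{\bm \lambda})$ in Eqn.~(\ref{eq6}): within a single iteration one first contracts $\mathbf{v}_i$ to the inner minimizer for the incumbent $u_i$, and then resolves $u_i$ as the outer maximizer for the updated $\mathbf{v}_i$. The two ingredients needed are already in hand: Lemma~\ref{lem1} characterizes the inner minimizer when $u_i=0$, while the derivative $\partial E_i/\partial v_i^{(j)} = \lambda^{(j)}(v_i^{(j)}-1)$ in Eqn.~(\ref{d1}) is strictly negative on $[0,\epsilon]$, forcing $v_i^{(j)}=\epsilon$ whenever $u_i=1$.

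The heart of the argument is a clean sandwich on the outer-max decision. Using $\max(u_i,v_i^{(j)})=v_i^{(j)}$ when $u_i=0$ and $=1$ when $u_i=1$, a direct subtraction gives
\begin{equation*}
E(1,\mathbf{v}_i) - E(0,\mathbf{v}_i) = \sum_{j=1}^{m}(1-v_i^{(j)})\,l_i^{(j)} - \gamma,
\end{equation*}
so the outer max chooses $u_i=1$ precisely when this quantity is positive. Every admissible $v_i^{(j)}$ lies in $[0,\epsilon]$, so the gap is pinched between $(1-\epsilon)\sum_j l_i^{(j)} - \gamma$ and $\sum_j l_i^{(j)} - \gamma$. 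The two hypotheses of the lemma are calibrated exactly to force a definite sign: $\sum_j l_i^{(j)} > \gamma/(1-\epsilon)$ makes even the lower envelope strictly positive, while $\sum_j l_i^{(j)} < \gamma$ makes even the upper envelope strictly negative, regardless of the particular $\mathbf{v}_i\in[0,\epsilon]^m$ supplied by the inner min.

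With this sandwich in place, the remainder is a case split on $u_i^t\in\{0,1\}$. In either branch the inner-min step at iteration $t{+}1$ produces $\mathbf{v}_i^{t+1}\in[0,\epsilon]^m$ (via Lemma~\ref{lem1} when $u_i^t=0$, and via the boundary $\epsilon$ from Eqn.~(\ref{d1}) when $u_i^t=1$), so the outer-max step reads off $u_i^{t+1}$ from the sign analysis above. Since $\mathbf{v}_i^{t+1}\in[0,\epsilon]^m$ is again admissible, iteration $t{+}2$ repeats the same reasoning verbatim and returns $u_i^{t+2}=u_i^{t+1}$, establishing the claimed two-step stabilization.

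The step I expect to require the most care is the low-loss branch starting from $u_i^t=1$: there the inner min pushes every $v_i^{(j)}$ to the upper boundary $\epsilon$, which is the worst configuration for keeping the gap negative, so the argument genuinely relies on the strict inequality $(1-\epsilon)\sum_j l_i^{(j)} < \gamma$ produced by $\epsilon\in(0,1)$ leaving a margin to the threshold. A symmetric subtlety arises in the high-loss branch when a coordinate from Lemma~\ref{lem1} happens to sit at $0$, and in both cases the explicit thresholds $\gamma$ and $\gamma/(1-\epsilon)$ are exactly what is needed to preserve the strict inequality uniformly over the admissible $\mathbf{v}_i$.
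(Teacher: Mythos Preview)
Your proposal is correct and follows essentially the same approach as the paper: compute the difference $E(1,\mathbf{v}_i)-E(0,\mathbf{v}_i)=\sum_j(1-v_i^{(j)})l_i^{(j)}-\gamma$, squeeze it between $(1-\epsilon)\sum_j l_i^{(j)}-\gamma$ and $\sum_j l_i^{(j)}-\gamma$ using $v_i^{(j)}\in[0,\epsilon]$, and branch on $u_i^t\in\{0,1\}$ using Lemma~\ref{lem1} and Eqn.~(\ref{d1}) to produce $\mathbf{v}_i^{t+1}$; the paper carries out exactly these four sub-cases, just without first isolating the sandwich inequality as you do. One minor wording slip: in the low-loss branch with $u_i^t=1$, the configuration $v_i^{(j)}=\epsilon$ is actually the \emph{best} case for keeping the gap negative (it makes the gap smallest), not the worst, so the strict inequality you cite is more than enough and does not genuinely hinge on $\epsilon>0$.
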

	
\begin{proof}
		Both cases are discussed below.
		
		1). $\sum_{j=1}^{m}l_i^{(j)} > \frac{\gamma}{1-\epsilon}$: Consider $u_i^t=1$. Since $\frac{\partial E_i}{\partial v^{(j)}_i}<0$, we have $\forall j$, $v_i^{(j)(t+1)}=\epsilon<1$. From $\mathbf{v}_i^{t+1}$ to $u_i^{t+1}$, we consider Eqn.~(\ref{eq5}) with respect to $u_i$ as follows:	
		\begin{small}
			\begin{equation}
			\begin{gathered}
			E(u_i^{t+1},\mathbf{v}_i^{t+1};\mathcal{L}_{i},\gamma,{\bm \lambda})  \\
			=\left\{
			\begin{aligned}	
			&\epsilon\sum_{j=1}^{m}l_i^{(j)} + f_{SL}(\mathbf{v}^{(t+1)}_i, \bm \lambda) \ \  &u_i^{t+1}=0; \\
			&\sum_{j=1}^{m}l_i^{(j)} -\gamma + f_{SL}(\mathbf{v}^{(t+1)}_i, \bm \lambda)\ \  &u_i^{t+1}=1. 
			\end{aligned}
			\right.
			\end{gathered}\label{u}
			\end{equation}
			\begin{displaymath}
			\begin{aligned}
			&E_{u_i^{t+1}=1}-E_{u_i^{t+1}=0} = \sum_{j=1}^{m}l_i^{(j)} -\gamma - \epsilon\sum_{j=1}^{m}l_i^{(j)} \\&= (1-\epsilon)\sum_{j=1}^{m}l_i^{(j)}-\gamma > (1-\epsilon)\frac{\gamma}{1-\epsilon}-\gamma = 0, 
			\end{aligned}
			\end{displaymath}
		\end{small}which leads to $u^{t+1}_i=1$ and $u^{t+2}_i=1$. Consider $u_i^t=0$. From Lemma~\ref{lem1}, it holds that $\forall j$, $v_i^{(j)(t+1)} \geq u_i^t$, and Eqn.~(\ref{eq5}) with respect to $u_i$ is presented as follows:	
			\begin{small}
			\begin{equation}
			\begin{gathered}
			E(u_i^{t+1},\mathbf{v}_i^{t+1};\mathcal{L}_{i},\gamma,{\bm \lambda})  \\
			=\left\{
			\begin{aligned}	
			&\sum_{j=1}^{m}v^{(j)(t+1)}_il_i^{(j)} + f_{SL}(\mathbf{v}^{(t+1)}_i, \bm \lambda) \ \  &u_i^{t+1}=0; \\
			&\sum_{j=1}^{m}l_i^{(j)} -\gamma + f_{SL}(\mathbf{v}^{(t+1)}_i, \bm \lambda)\ \  &u_i^{t+1}=1. 
			\end{aligned}
			\right.
			\end{gathered}\label{v2}
			\end{equation}
		\end{small}		
		\begin{small}
			\begin{displaymath}
			\begin{aligned}
			&E_{u_i^{t+1}=1}-E_{u_i^{t+1}=0} \\
			 &= \sum_{j=1}^{m}l_i^{(j)} -\gamma - \sum_{j=1}^{m}v^{(j)(t+1)}_il_i^{(j)} 
			\geq \sum_{j=1}^{m}l_i^{(j)} -\gamma - \epsilon\sum_{j=1}^{m}l_i^{(j)}\\
			&= (1-\epsilon)\sum_{j=1}^{m}l_i^{(j)}-\gamma > (1-\epsilon)\frac{\gamma}{1-\epsilon}-\gamma = 0, 
			\end{aligned}
			\end{displaymath}
		\end{small}which leads to $u^{t+1}_i=1$ and $u^{t+2}_i=1$. 
		
			2). $\sum_{j=1}^{m}l_i^{(j)} < \gamma$: Consider $u_i^t=0$. Then, Eqn.~(\ref{eq6}) transforms into Eqn.~(\ref{r2}). With respect to $\mathbf{v}_i$, we follow the solution given in Eqn.~(\ref{v1}) and consider $u^{t+1}_i$ in Eqn.~(\ref{v2}).
				\begin{small}
					\begin{displaymath}
					\begin{aligned}
					E_{u_i^{t+1}=1}-E_{u_i^{t+1}=0} &= \sum_{j=1}^{m}l_i^{(j)} -\gamma - \sum_{j=1}^{m}v^{(j)(t+1)}_il_i^{(j)} \\
					&\leq \sum_{j=1}^{m}l_i^{(j)} -\gamma< 0,
					\end{aligned}
					\end{displaymath}
				\end{small}which leads to $u^{t+1}_i=0$ and $u^{t+2}_i=0$. Consider $u_i^t=1$; then, we similarly obtain $\mathbf{v}_i^{t+1}=\{\epsilon\}^m$. According to Eqn.~(\ref{u}), $E_{u_i^{t+1}=1}-E_{u_i^{t+1}=0} = \sum_{j=1}^{m}l_i^{(j)} -\gamma - \epsilon\sum_{j=1}^{m}l_i^{(j)} \leq\sum_{j=1}^{m}l_i^{(j)} -\gamma< 0$; thus, it holds that $u_i^{t+1}=0$ and $u_i^{t+2}=0$.
		
		The analyses of 1) and 2) together conclude the proof.
	    \end{proof}

Based on Lemmas~\ref{lem1} and \ref{lem2}, we present the proof of Proposition 2 as follows.
	    
\begin{proof}
Let $u_0$ and $\mathbf{v}_{0}$ denote the initializations of $u_i$ and $\mathbf{v}_i$, respectively. Following Lemma~\ref{lem2}, when $\forall~t\geq 2$, the following conclusion holds:
		\begin{small}
			\begin{equation}
			\begin{aligned}
			u_i^\ast=\lim_{t \to +\infty}u_i^t=\left\{
			\begin{aligned}
			1  \ \ &\sum_{j=1}^{m}l_i^{(j)} > \frac{\gamma}{1-\epsilon}; \\
			0  \ \ &\sum_{j=1}^{m}l_i^{(j)} < \gamma. \\  
			\end{aligned}
			\right.
			\end{aligned}\label{eq8}
			\end{equation}
		\end{small}Here, we consider $\mathbf{v}^\ast$. When $\sum_{j=1}^{m}l_i^{(j)} > \frac{\gamma}{1-\epsilon}$ , it holds that $u_i^{t}=u_i^{t+1}=1$. Then, $\forall j$, consider $v_i^{(j)t}$ as follows:
		\begin{small}
\begin{displaymath}
\begin{aligned}
\underset{\mathbf{v}_i^t}{\min} \ E(u_i^t,\mathbf{v}_i^t;\mathcal{L}_i&,\gamma,{\bm \lambda}) \\= \underset{\{v_i^{(j)t}\}^m_{j=1}}{\min}\sum_{j=1}^{m}\max(u_i^t,v_i^{(j)t})l_i^j &+ f_{AL}(u_i^t,\gamma) + f_{SL}(\mathbf{v}_i^t,{\bm \lambda})\\=
\sum_{j=1}^{m}\underset{v_i^{(j)t}}{\min}( \frac{1}{2}\lambda^{(j)}((v_i^{(j)t})^2-2v_i&^{(j)t})) + u_i^t\sum_{j=1}^{m}l_i^{(j)} + f_{AL}(u_i^t,\gamma).
\end{aligned}
\end{displaymath}
		\end{small}Then, $\forall j$, we solve $\underset{v_i^{(j)t}}{\min}\frac{1}{2}\lambda^{(j)}((v_i^{(j)})^2-2v_i^{(j)})$ and obtain $v_i^{(j)t}=\epsilon$. The same result is found for $v_i^{(j)(t+1)}$, leading to
		\begin{small}
			\begin{equation}
			\begin{aligned}
			\mathbf{v}_i^\ast=\lim_{t \to +\infty}\mathbf{v}_i^t=
			\epsilon.  \\  
			\end{aligned}\label{eq9}
			\end{equation}
		\end{small}When $\sum_{j=1}^{m}l_i^{(j)} < \gamma$, it holds that $u_i^{t}=u_i^{t+1}=0$. Thus, $\forall j\in[m]$, we consider $v_i^{(j)t}$ as below:
		\begin{small}
			\begin{displaymath}
			\begin{aligned}
			\underset{\mathbf{v}_i^t}{\min} \ E(u_i^t,\mathbf{v}_i^t;\mathcal{L}_i&,\gamma,{\bm \lambda}) \\= \underset{\{v_i^{(j)t}\}^m_{j=1}}{\min}\sum_{j=1}^{m}\max(u_i^t,v_i^{(j)t})l_i^{(j)} &+ f_{AL}(u_i^t,\gamma) + f_{SL}(\mathbf{v}_i^t,{\bm \lambda})\\=
			\underset{\{v_i^{(j)t}\}^m_{j=1}}{\min}\sum_{j=1}^{m}v_i^{(j)t}l_i^{(j)} + \frac{1}{2}\lambda^{(j)}(&(v_i^{(j)t})^2-2v_i^{(j)t}) + f_{AL}(u_i^t,\gamma) \\=
			\sum_{j=1}^{m}\underset{v_i^{(j)t}}{\min}(v_i^{(j)t}l_i^{(j)} + \frac{1}{2}\lambda^{(j)}((v_i&^{(j)t})^2-2v_i^{(j)t})) + f_{AL}(u_i^t,\gamma).
			\end{aligned}
			\end{displaymath}
		\end{small}Then, $\forall j$, we optimize $\underset{v^{(j)t}_i}{\min}(v^{(j)t}_il_i^{(j)} + \frac{1}{2}\lambda^{(j)}((v_i^{(j)t})^2-2v_i^{(j)t}))$. The solution is given by Lemma~\ref{lem1}, which also holds $\forall t'>t$. We conclude that the proposition is justified by Eqns.~(\ref{eq8})--(\ref{eq9}) and Lemma~\ref{lem1}.
\end{proof}
}

\ifCLASSOPTIONcaptionsoff
  \newpage
\fi



%
\bibliographystyle{IEEEtran}
\bibliography{asm}

\begin{IEEEbiography}[{\includegraphics[width=1in,height=1.25in,clip,keepaspectratio]{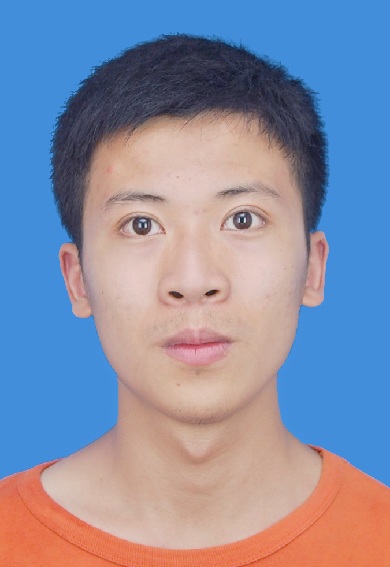}}]{Keze Wang} received his B.S. degree in software engineering from Sun Yat-Sen University, Guangzhou, China, in 2012. He is currently pursuing the dual Ph.D. degree at Sun Yat-Sen University and Hong Kong Polytechnic University, advised by Prof. Liang Lin and Lei Zhang. His current research interests include computer vision and machine learning. More information can be found in his personal website \url{http://kezewang.com}.
\end{IEEEbiography}

\begin{IEEEbiography}[{\includegraphics[width=1in,height=1.25in,clip,keepaspectratio]{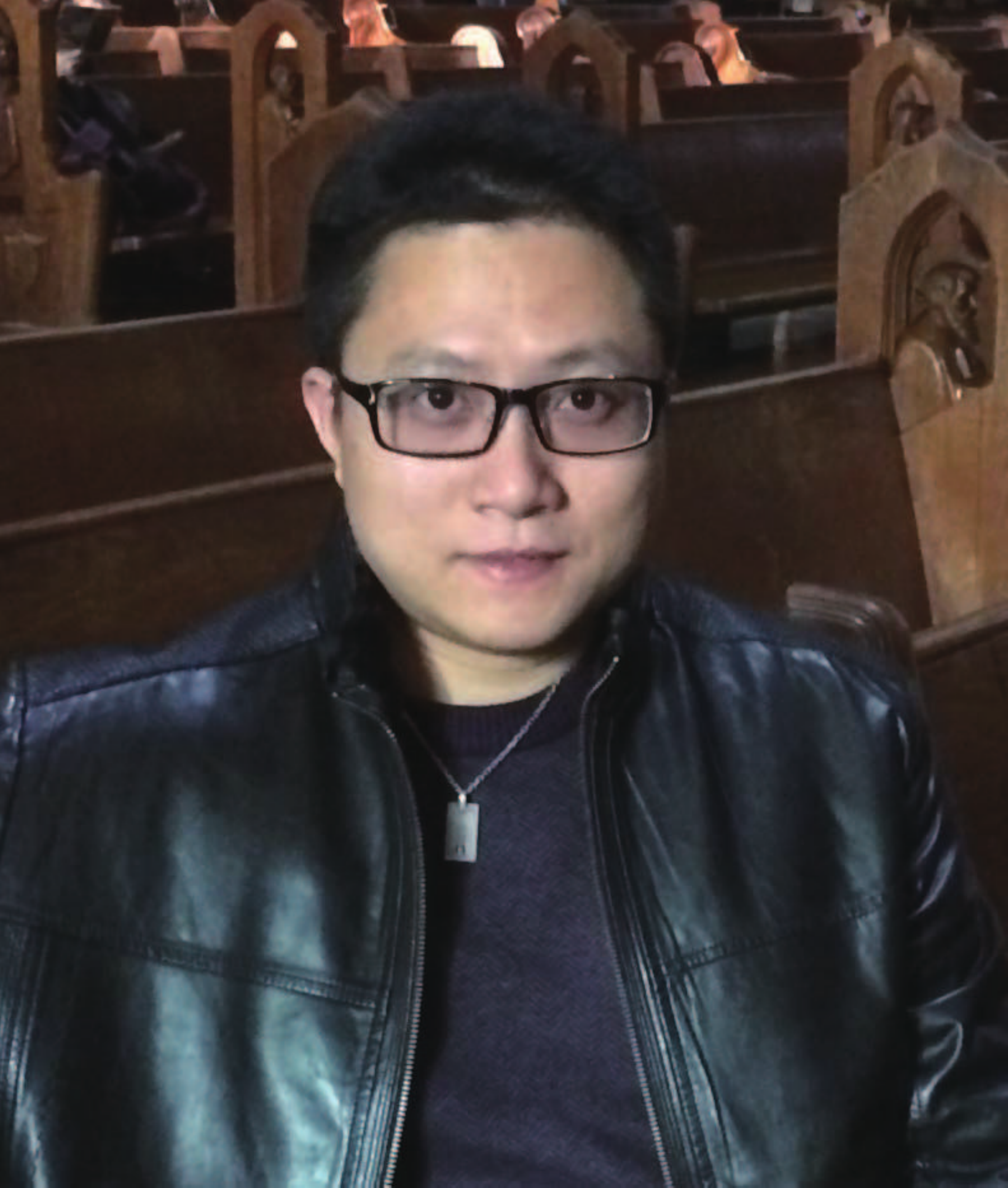}}]{Liang Lin} (M’09, SM’15) is the Executive R\&D Director of SenseTime Group Limited and a Full Professor of Sun Yat-sen University. He is the Excellent Young Scientist of the National Natural Science Foundation of China. He has authorized and co-authorized on more than 100 papers in top-tier academic journals and conferences. He was the recipient of Best Paper Runners-Up Award in ACM NPAR 2010, Google Faculty Award in 2012, Best Paper Diamond Award in IEEE ICME 2017, and Hong Kong Scholars Award in 2014. He is a Fellow of IET.
\end{IEEEbiography}

\begin{IEEEbiography}[{\includegraphics[width=1in,height=1.25in,clip,keepaspectratio]{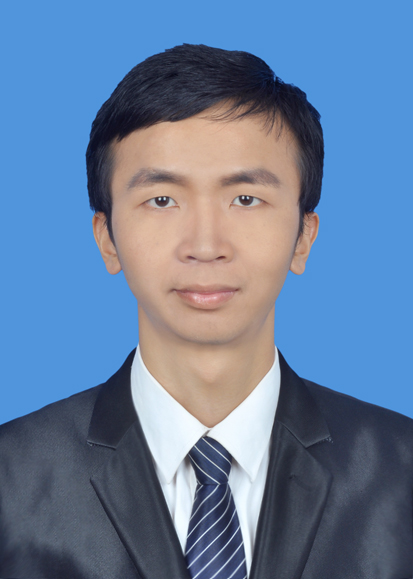}}]{Xiaopeng Yan} received the B.E. degree in Automation at Sun Yat-sen University, Guangzhou, China, in 2017, where he is currently pursuing the master degree  in Computer Science in School of Data and Computer Science, advised by Professor Liang Lin. His current research interests include the application of computer vision (e.g., object detection) and machine learning. 
\end{IEEEbiography}

\begin{IEEEbiography}[{\includegraphics[width=1in,height=1.25in,clip,keepaspectratio]{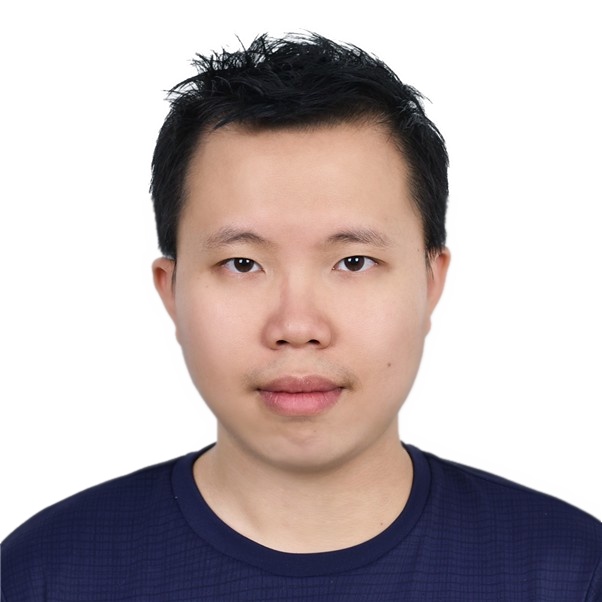}}]{Ziliang Chen} received the BS degree in  Mathematics and Applied Mathematics from Sun Yat-Sen University, Guangzhou, China. He is currently pursuing the Ph.D. degree in computer science and technology at Sun Yat-Sen University, advised by Professor Liang Lin. His current research interests include computer vision and machine learning.
\end{IEEEbiography}

\begin{IEEEbiography}[{\includegraphics[width=1in,height=1.25in,clip,keepaspectratio]{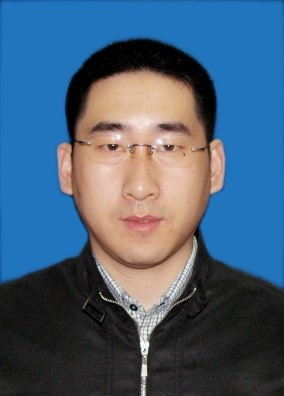}}]{Dongyu Zhang} received the B.S. and Ph.D. degrees from the Harbin Institute of Technology, Harbin,China, in 2003 and 2010, respectively.
He is currently a Research Associate Professor with the School of Data and Computer Science, Sun Yat-sen University, Guangzhou, China. His current research interests include computer vision and machine learning.
\end{IEEEbiography}

\begin{IEEEbiography}[{\includegraphics[width=1in,height=1.25in,clip,keepaspectratio]{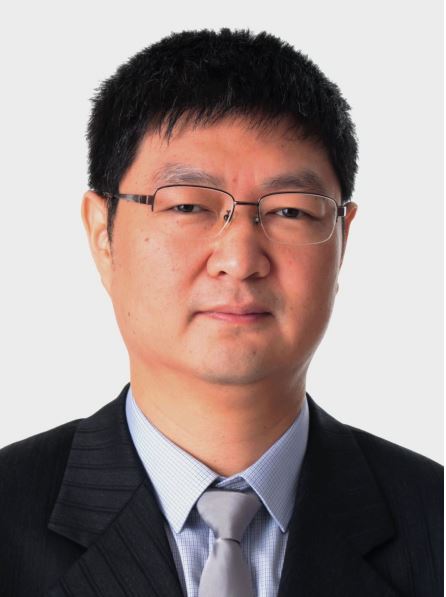}}]{Lei Zhang} (M’04, SM’14, F’18) received his B.Sc. degree in 1995 from Shenyang Institute of Aeronautical Engineering, Shenyang, P.R. China, and M.Sc. and Ph.D degrees in Control Theory and Engineering from Northwestern Polytechnical University, Xi’an, P.R. China, respectively in 1998 and 2001, respectively. From 2001 to 2002, he was a research associate in the Department of Computing, The Hong Kong Polytechnic University. From January 2003 to January 2006 he worked as a Postdoctoral Fellow in the Department of Electrical and Computer Engineering, McMaster University, Canada. In 2006, he joined the Department of Computing, The Hong Kong Polytechnic University, as an Assistant Professor. Since July 2017, he has been a Chair Professor in the same department. His research interests include Computer Vision, Pattern Recognition, Image and Video Analysis, and Biometrics, etc. Prof. Zhang has published more than 200 papers in those areas. As of 2018, his publications have been cited more than 30,000 times in the literature. Prof. Zhang is an Associate Editor of IEEE Trans. on Image Processing, SIAM Journal of Imaging Sciences and Image and Vision Computing, etc. He is a ``Clarivate Analytics Highly Cited Researcher'' from 2015 to 2017. More information can be found in his homepage http://www4.comp.polyu.edu.hk/~cslzhang/.
\end{IEEEbiography}

%

%
%
%




\end{document}